\newcommand{\mathletter}[1]{
	\expandafter\newcommand\csname b#1\endcsname{\mathbb #1}
	\expandafter\newcommand\csname c#1\endcsname{\mathcal #1}
	\expandafter\newcommand\csname f#1\endcsname{\mathfrak #1}
	\expandafter\newcommand\csname til#1\endcsname{\widetilde #1}
	\expandafter\newcommand\csname ha#1\endcsname{\widehat #1}
	\expandafter\newcommand\csname bf#1\endcsname{\bf #1}
	\expandafter\newcommand\csname s#1\endcsname{\mathsf #1}
}
\def\mathletters#1{\mathlettersB #1,,}
\def\mathlettersB#1,{\ifx,#1,\else\mathletter #1\expandafter\mathlettersB\fi}
\newcommand{\mathletterl}[1]{
	\expandafter\providecommand\csname v#1\endcsname{\vec{#1}}
}
\def\mathlettersl#1{\mathlettersC #1,,}
\def\mathlettersC#1,{\ifx,#1,\else\mathletterl #1\expandafter\mathlettersC\fi}
\def\bt{{\tilde{t}}}
\def\bea{\begin{equation}\begin{aligned} }
\def\ena{\end{aligned}\end{equation} }
\def\bee{\begin{equation}}
\def\ene{\end{equation}}
\renewcommand{\vec}[1]{\mathbf{#1}}
\def\T{\mathsf{T}}
\def\bone{{\mathbf{1}}}
\def\bzero{{\mathbf{0}}}
\def\diag{\text{diag}}
\newtheorem{assum}{Assumption}
\newtheorem{theorem}{Theorem}
\newtheorem{lemma}{Lemma}
\newtheorem{definition}{Definition}
\newtheorem{corollary}{Corollary}
\newtheorem{remark}{Remark}
\def\@fnsymbol#1{\ensuremath{\ifcase#1\or  \natural \or \dagger\or * \or \ddagger\or
   \mathsection\or \mathparagraph\or \|\or **\or \dagger\dagger
   \or \ddagger\ddagger \else\@ctrerr\fi}}
\title{
Fully Asynchronous Policy Evaluation in Distributed Reinforcement Learning over Networks
}
\author{Xingyu Sha
\thanks{
    X. Sha, J. Zhang and K. You are with the Department of Automation and BNRist, 
    Tsinghua University, Beijing 100084, China. 
    Emails: \texttt{shaxy18@mails.tsinghua.edu.cn, zjq16@mails.tsinghua.edu.cn, youky@tsinghua.edu.cn.}
    }
\and Jiaqi Zhang$^\natural$ \and Keyou You$^\natural$
\and Kaiqing Zhang
\thanks{
    K. Zhang and T. Ba\c{s}ar are with Department of Electrical and Computer Engineering and Coordinated Science Laboratory, University of Illinois at Urbana-Champaign, Urbana, IL 61801 USA.
    Emails: \texttt{kzhang66@illinois.edu, basar1@illinois.edu.}
    } 
\and Tamer~Ba\c{s}ar$^\dagger$ 
}
\begin{document}

\maketitle

\begin{abstract}
	This paper proposes a \emph{fully asynchronous} scheme for the  policy evaluation problem of distributed reinforcement learning (DisRL) over directed peer-to-peer networks. Without waiting for any other node of the network, each node can locally update its value function at any time by using (possibly delayed) information from its neighbors. This is in sharp contrast to the gossip-based scheme where a pair of nodes concurrently update. Though the fully asynchronous setting involves a difficult multi-timescale decision problem, we design a novel stochastic average gradient (SAG) based distributed algorithm and develop a push-pull augmented graph approach to prove its exact convergence at a linear rate of $\mathcal{O}(c^k)$ where $c\in(0,1)$ and $k$ increases by one no matter on which node updates. Finally, numerical experiments validate that our method speeds up linearly with respect to the number of nodes, and is robust to straggler nodes. 
\end{abstract}
    
\section{Introduction}\label{sec1}

Reinforcement learning (RL) aims to guide decision makers (aka agents) to learn optimal policies/strategies by interacting with the environment, which has achieved super-human performance in many tasks \cite{silver2017mastering,mnih2015human}. This work considers the \emph{distributed reinforcement learning (DisRL)} problem \cite{chen2018communication} over a directed {\em peer-to-peer} network, which includes two distinct RL setups depending on the role of the network node. 

One is the so-called multi-agent RL (MARL) in which network nodes are the acting agents of MARL \cite{zhang2018fully, wai2018multi, kar2013cal}, and has proved its capability in solving the problems of e-sports  \cite{vinyals2019grandmaster}, swarm robotics  \cite{kober2013reinforcement}, traffic control \cite{van2016coordinated} and resource allocation \cite{mannion2016dynamic}, etc. The other is the parallel RL where network nodes are designed to jointly solve a large-scale RL problem over decentralized datasets \cite{mnih2016asynchronous,espeholt2018impala}. The celebrated A3C in \cite{mnih2016asynchronous} adopts a master-worker configuration for parallel RL where a central node communicates with all the other nodes and is a special case of DisRL over networks.

In DisRL over networks, each node has access to a local dataset and only communicates with a subset of agents to learn the policy locally, either synchronously or asynchronously.  In a synchronous model, all nodes communicate and update their learning variables in synchronized rounds via a global clock/counter \cite{zhang2018fully,qu2019value,wai2018multi}, which  may be hard to implement in large-scale systems \cite{bertsekas1989parallel}, and suffer from deadlocks \cite{assran2017empirical,lian2017asynchronous}. While  synchronous algorithms are easier to design and analysis, their efficiency is greatly dragged by the slow nodes. 

To facilitate the implementation and improve the computational efficiency of the DisRL, we adopt the {\em fully asynchronous} model from \cite{herz1993distributed} where every node can update at {\em any} time by using possibly delayed information from neighbors. Since we aim for {\em directed} communications, each node does not need to wait for others and is free of deadlocks. This is in sharp contrast to the random gossip-based asynchronous methods where only a pair of nodes concurrently update via pairwise averaging per round \cite{lian2017asynchronous, xu2017convergence,assran2021asynchronous}. Clearly, such a coordination between paired nodes may create deadlocks in practice and is vulnerable to information delays. It can also be problematic if a node is unable to respond or has only access to its local dataset \cite{lian2017asynchronous}. In fact, the advantages of the fully asynchronous model has been well documented both empirically and theoretically for distributed optimization problems, see e.g.  \cite{assran2017empirical,zhang2019asyspa,tian2020achieving}. In this work, we study the fully asynchronous policy evaluation of the DisRL over directed  networks.

The policy evaluation of MARL over networks is a rising topic and has been studied in \cite{doan2019finite, cassano2020multi,wai2018multi, ren2019communication, ding2019fast}. As \cite{wai2018multi, cassano2020multi}, we adopt the primal-dual reformulation  with a mean square projected Bellman error (MSPBE) objective function for policy evaluation.  However, the striking difference is that they only consider the synchronous case with {\em undirected} networks. In fact,  the information flow over a fully asynchronous network has only one direction since each node cannot expect any response from other nodes even if the physical communication network  is bidirected. Thus the directed network is an inevitable issue in our fully asynchronous DisRL. Note that the extension from the undirected to directed networks is nontrivial in the distributed setting \cite{xie2018distributed}.

In the policy evaluation, we usually have to use the stochastic gradient (SG). While the SG descend method bears a vast body of literature in the centralized setting \cite{bottou2018optimization}, it is not the case in the distributed setting, and how to design effective distributed learning algorithms with SGs has attracted an increasing attention \cite{xin2020general}. For example, the distributed methods with SGs in \cite{xin2019variance,yuan2019variance,mokhtari2016dsa,hendrikx2019asynchronous} are all restricted to undirected networks. Though it has been resolved in 
\cite{qureshi2020push} via the SAGA method, it involves a division operator per iteration with a denominator that could be arbitrarily close to zero, leading to the numerical instability. It is also unclear how to extend the above mentioned works to the fully asynchronous model.

In this work, we integrate the  push-pull strategy in  \cite{xin2018linear,pu2021push}, which however only solve the gradient-based distributed optimization problems in a synchronous way, with the celebrated stochastic average gradient (SAG) method, and design a novel fully asynchronous push-pull SAG (APP-SAG) to find a saddle point of the primal-dual reformulation of the DisRL over directed  networks.

Since the fully asynchronous setting involves a  multi-timescale decision problem, the convergence proof of APP-SAG is challenging. To this end, we develop a push/pull augmented graph approach to prove the exact linear convergence of APP-SAG. Though the graph augmentation is  proposed by \cite{nedic2010convergence}, they only use one type of virtual nodes for each node, called the {\em pull} augmented digraph in this work. To describe APP-SAG with a single timescale, we further design a {\em push} augmented digraph. The push-pull augmentation scheme is also different from the augmented graph in \cite{tian2020achieving} where multiple virtual edges are introduced for each edge of the network. From the worst-case point of view, we  show that APP-SAG converges linearly at a deterministic rate of $\mathcal{O}(c^k)$ where $c\in(0,1)$ mainly depends on the network topology, the level of asynchronism and the maximum delay lengths, and $k$ increases by one no matter on which node updates.

In fact, the A3C in \cite{mnih2016asynchronous}  is the first asynchronous parallel RL, where the asynchronicity is shown to have stabilizing advantages in the training process. In GALA \cite{assran2019gossip},  the central node of A3C is replaced with a group of learners over a directed  network, which has been shown to outperform A3C in both data efficiency and robustness.
Unfortunately, it cannot ensure convergence guarantees for policy evaluation. To the best of our knowledge, how to design a fully asynchronous distributed learning algorithm for the policy evaluation of DisRL over directed  networks remains an open problem.  Even for gradient-based distributed optimization, \cite{assran2021asynchronous} show that a naive extension of synchronous gradient-push algorithm to the fully asynchronous model may diverge. Though it has been resolved in \cite{zhang2019asyspa} via adaptive tuning stepsizes or gradient tracking technique in \cite{tian2020achieving, zhang2019asynchronous}, none of them can ensure an {\em exact} convergence with SGs.

Finally,  APP-SAG is compared with PD-DistIAG \cite{wai2018multi} and FDPE \cite{cassano2020multi} on a real networked MARL problem in \cite{qu2020scalable}. We also illustrate its speedup property on the mountaincar problem from \cite{sutton1998introduction}.  To summarize, the main contributions of this work at least include:
\begin{enumerate}[(a)]
\item We have designed a fully asynchronous APP-SAG with an exact linear convergence for the policy evaluation of DisRL. 
\item By inspecting the features of APP-SAG, we introduce the concepts of push/pull augmented digraphs to clarify our augmented approach.  
\item The celebrated SAG has been extended to the distributed setting over directed networks. 
\end{enumerate}

The rest of the paper is organized as follows. In Section \ref{bigsec2}, we introduce the formulation of DisRL. In Section \ref{bigsec3}, we propose APP-SAG with linear convergence guarantees. In Section \ref{aug}, we construct the push/pull augmented digraphs to reformulate APP-SAG. In Section \ref{bigsec4}, we use the numerical results to validate the performance of APP-SAG. Some concluding remarks are included in Section \ref{bigsec5}. A formal proof is postponed to Appendix \ref{prooj_theo}.

Main notations used in this paper are given below. (1)
	$[\vec x_1,\ldots,\vec x_n]$ and $[\vec x_1;\ldots;\vec x_n]$ denote the horizontal and vertical stack of vectors
	$\vec x_1,\ldots,\vec x_n$, respectively.
	(2) $\Vert\cdot\Vert_2$ denotes the $l_2$-norm of a vector or matrix. $\Vert\cdot\Vert_\sF$ denotes the matrix Frobenius norm. $\Vert \cdot\Vert_{\bm{D}}$ denotes the
	${\bm{D}}$-norm, i.e., $\Vert \vx\Vert_{\bm{D}}=({\vx^\T\bm{D}}\vx)^{1/2}$ if the matrix ${\bm{D}}$ is semi-positive definite.
	(3) $[\bm{A}]_{ij}$ denotes the $(i,j)$-th element of $\bm{A}$. $\lambda_{\text{max}}(\bm{A})$ and $\lambda_{\text{min}}(\bm{A})$ denote the largest and smallest eigenvalues of $\bm{A}$, respectively.
	(4) $\bone_n$ and $\bzero_n$ denote the $n$-dimensional vector with all ones and all zeros, respectively.
	(5) $\bm{A}$ is nonnegative. Then it is a row-stochastic matrix if $\bm{A}\bone=\bone$, and is column-stochastic if $\bone^\T\bm{A}=\bone^\T$.

\section{Problem formulation}\label{bigsec2}
\subsection{DisRL over networks}\label{sec2-1}

Let $\mathcal G=(\mathcal V,\mathcal E)$ be a  directed  network, where $\mathcal V=\{1,\ldots,n\}$ is the set of network nodes and $\mathcal E\subseteq \mathcal V\times \mathcal V$ is the set of edges. 
A directed edge $(i,j)\in \mathcal E$ means that node $i$ can directly send information to node $j$.
Denote $\mathcal N_{in}^i=\{j|(j,i)\in \mathcal E\}\cup \{i\}$ and $\mathcal N_{out}^i=\{j|(i,j)\in \mathcal E\}\cup \{i\}$ as the sets of in-neighbors and out-neighbors of node $i$, respectively. 

DisRL over a  network is based on a finite discrete Markov decision process (MDP) with a tuple $(\mathcal{S}, \prod_{i=1}^n\mathcal{A}_i, \mathcal{P}, \{\mathcal{R}_i\}_{i=1}^n, \gamma, \mathcal G)$, 
where  $\mathcal{S}$ and $\mathcal{A}_i$ denote the state space and the action space of node $i$ in $\mathcal G$, respectively. $\mathcal{P}(\vec s'|\vec s,\vec a)$ is the probability of transition from $\vec s\in\mathcal{S}$ to $\vec s'\in\mathcal{S}$ under an action $\vec{a}\in\mathcal{A}:=\prod_{i=1}^n\mathcal{A}_i$. 
$\mathcal{R}_i=\mathcal{R}_i(\vec s,\vec a, \vec s')$ is the reward perceived by a single node $i$. $\gamma\in(0, 1)$ is the discount factor. $\mathcal G$ is used to model the interactions among nodes. 

A stochastic policy $\bm \pi(\vec{a}|\vec{s})$ is the conditional probability of taking action $\vec{a}$ given a specific state $\vec{s}$. 
If we focus on a target policy $\bm \pi$, the fixed transition probability matrix is denoted as $\bm{\mathcal{P}}^{\bm{\pi}}$, whose $(\vec s,\vec s')$-th entry is given by $[\bm{\mathcal{P}}^{\bm{\pi}}]_{s,s'}=\mathcal{P}(\vec s'|\vec s,\bm\pi)=\sum_{\vec{a}\in\mathcal{A}}{\bm{\pi}}(\vec{a}|\vec{s})\cdot \mathcal{P}(\vec s'|\vec s,\vec a)$. 
Let $\mathcal{R}^{\bm{\pi}}(\vec{s})=\frac1n \sum_{i=1}^n\mathcal{R}_i^{\bm{\pi}}(\vec{s})$, where $\mathcal{R}_i^{\bm{\pi}}(\vec{s}) = \mathbb{E}_{\mathcal{P}, \vec{a}\sim {\bm{\pi}}(\cdot|\vec{s})}[\mathcal{R}_i(\vec{s},\vec{a},\vec{s}')]$ is the expected reward of node $i$ at state $\vec{s}$ if the group follows $\bm{\pi}$.

\subsection{Policy evaluation over networks}\label{sec2-2}
In this work, we solve the mean squared projected Bellman error (MSPBE) problem in both MARL and parallel RL setups with experiences over networks.

Under a given policy ${\bm{\pi}}$, value function $V^{\bm{\pi}}(\vec{s})$ is defined as $
  V^{\bm{\pi}}(\vec{s})=
  \mathbb{E}\left[\sum_{t=0}^\infty \gamma^t \mathcal{R}^{\bm{\pi}}(\vec{s}(t))|\vec{s}(0)=\vec{s},\bm{\pi},\mathcal{P} \right]
$, and we simply write its vector form as $\bm{V}^{\bm\pi}\in \mathbb{R}^{|\mathcal{S}|}$, which satisfies the Bellman equation \cite{sutton1998introduction}
$
  \bm{V}^{\bm{\pi}}=\bm{\mathcal{R}}^{\bm{\pi}} + \gamma\bm{\mathcal{P}}^{\bm{\pi}}\bm{V}^{\bm{\pi}},
$
where $\bm{\mathcal{R}}^{\bm\pi}$ is obtained by stacking up ${\mathcal{R}}^{\bm\pi}(\vec s)$. In this work, we adopt a linear approximator to evaluate policy, e.g.
$V^{\bm \pi}(\vec s)\approx \bm{\phi}^\T(\vec s)\bm{\theta},$
where $\bm{\phi}(\vec s)\in\mathbb{R}^d$ is a feature vector
corresponding to $\vec s\in\cS$. 
That is finding a vector $\bm{\theta}\in \mathbb{R}^d$ such that $\bm{V}_{\bm{\theta}}=\bm{\Phi\theta}\approx \bm{V}^{\bm{\pi}}$ by minimizing MSPBE.
$
  J(\bm{\theta}) = \frac12 \Vert \bm{\Pi}_{\bm{\Phi}}( \bm{V}_{\bm{\theta}} - \gamma \bm{P}^{\bm{\pi}}\bm{V}_{\bm{\theta}} - \bm{\mathcal{R}}^{\bm{\pi}} ) \Vert^2_{\bm{D}} + \frac{\rho}{2} \Vert \bm{\theta}\Vert^2,
$
where $\bm{D}=\mbox{diag}(\bm{\mu}^{\bm{\pi}}(\vec s))$
 and $\bm{\mu}^{\bm{\pi}}(\vec s)$ is the stationary distribution of states under ${\bm{\pi}}$, $\bm{\Pi}_{\bm{\Phi}}$ is the projection onto the linear subspace $\{\bm{\Phi\theta}\}$, and 
$\frac{\rho}{2} \Vert \bm{\theta}\Vert^2$ is the regularization term. 
By \cite{sutton2009MSPBE}, the MSPBE loss function is equivalent to
\begin{equation}\label{equ:2-2:MSPBE2}
  J(\bm{\theta})=\frac12 \Vert\bm{A}\bm{\theta}-\bm{b}\Vert^2_{\bm{C}^{-1}} + \frac{\rho}{2} \Vert \bm{\theta}\Vert^2
\end{equation}
where $
  \bm{A}=\mathbb{E}_{\mu^{\bm{\pi}}}[\bm{\phi}_t(\bm{\phi}_t-\gamma \bm{\phi}_{t+1})^\T],
  \bm{b}=\mathbb{E}_{\mu^{\bm{\pi}}}[\mathcal{R}^{\bm{\pi}}(\vec{s}(t))],
  \bm{C}=\mathbb{E}_{\mu^{\bm{\pi}}}[\bm{\phi}_t\bm{\phi}_t^\T]$,
and $\bm{\phi}_t$ is the short for $\bm{\phi}(\vec s(t))$.

\subsubsection{Parallel RL}

Parallel RL adopts multiple nodes to learn
an optimal policy $\bm{\pi}$. In this case, the MDP tuple of each node $i$ is $(\mathcal{S}_i, \mathcal{A}_i, \mathcal{P}_i, \mathcal{R}_i, \gamma)$,
where $\mathcal{S}_i=\mathcal{S}, \mathcal{A}_i=\mathcal{A}, \mathcal{P}_i=\mathcal{P}$ since the MDP model is identical. 
The local rewards may differ even under the same distribution, e.g. $\mathbb{E}_{\cP,\mu^{\bm{\pi}}}[\mathcal{R}_i(\vec s, \vec a, \vec s')]=\mathbb{E}_{\cP,\mu^{\bm{\pi}}}[\mathcal{R}_j(\vec s, \vec a, \vec s')], \forall (\vec s, \vec a, \vec s')\in \mathcal{S}\times \mathcal{A}\times\mathcal{S}, \forall i,j\in \mathcal{V}.$ 

Each node collects a finite sequence of samples $\cX_i=\{ \vec{s}_{i,p},\vec{a}_{i,p}, \mathcal{R}_{i}(\vec{s}_{i,p},\vec{a}_{i,p}, \vec{s}_{i,p+1}) \}_{p=1}^{m_i}$ in the learning process.
Let ${m}=\sum_{i=1}^{n}m_i$, and $\bm{A}$ in \eqref{equ:2-2:MSPBE2} is estimated as 
$\widehat{\bm{A}}=\frac{m_i}{{m}}\sum_{i=1}^n\widehat{\bm{A}}_i,$ 
where node $i$ locally computes 
$\widehat{\bm{A}}_i = \frac{1}{m_i} \sum_{p=1}^{m_i}\widehat{\bm{A}}_{i,p}$ and $\widehat{\bm{A}}_{i,p}=\bm{\phi}_{i,p}(\bm{\phi}_{i,p}-\gamma \bm{\phi}_{i,p+1})^\T$. 
Note that ${\bm{b}}$ and ${\bm{C}}$ are estimated analogously.
\par Under the distributed setting for \eqref{equ:2-2:MSPBE2}, each node maintains a local copy $\{\bm{\theta}_i\}$, leading to a consensus-based form
\begin{equation}\label{equ:2-2:primal-problem}\left.\begin{aligned}
  &\min_{\bm{\theta}} \frac12 \left\Vert \frac{m_i}{{m}} \sum_{i=1}^n\left(\widehat{\bm{A}}_i\bm{\theta}_i-\widehat{\bm{b}}_i \right)\right\Vert^2_{\widehat{\bm{C}}^{-1}} + \frac{m_i}{{m}}\sum_{i=1}^n\frac \rho2 \Vert \bm{\theta}_i\Vert^2\\
  & \text{subject to}\ \bm{\theta}_1=\bm{\theta}_2=\ldots=\bm{\theta}_n.
\end{aligned}\right.\end{equation}

\subsubsection{Collaborative MARL}
In MARL, the nodes are heterogeneous agents over networks \cite{zhang2019multi}. Each agent observes a global state $\vec s$ and chooses an action $\vec a_i\in \mathcal{A}_i$ under its local policy $\bm{\pi}_i(\vec a_i|\vec s)$.  The joint action $\vec a=(\vec{a}_1,\ldots,\vec{a}_n)$ is then executed by the group.  Thus the state transition 
depends on the joint policy $\bm{\pi}=(\bm{\pi}_1,\ldots,\bm{\pi}_n)$. 

Under $\bm \pi$, the distributed agents together collect a joint trajectory $\{ \vec{s}_{p},\vec{a}_{p} \}_{p=1}^{m_i}$, where $m_i=m_j,\forall i,j\in\cV$, but with different reward sequences $\{\mathcal{R}_i(\vec{s}_{p},\vec{a}_{p}, \vec{s}_{p+1}) \}_{p=1}^{m_i}$.
The agents aim to collaboratively maximize the group mean reward $\mathcal{R}(\vec{s}_{p},\vec{a}_{p},\vec{s}_{p+1})=\frac1n\sum_{i=1}^n\mathcal{R}_i(\vec{s}_{p},\vec{a}_{p}, \vec{s}_{p+1})$.
The local data of $i$ is $\cX_i=\{ \vec{s}_{p},\vec{a}_{p}, \mathcal{R}_i(\vec{s}_{p},\vec{a}_{p}, \vec{s}_{p+1}) \}_{p=1}^{m_i}$.
Then, the empirical MSPBE of MARL is identical to \eqref{equ:2-2:primal-problem}, except the minor difference that $m_i=m/n,\widehat{\bm{C}}_i=\widehat{\bm{C}}$ and $\widehat{\bm{A}}_i=\widehat{\bm{A}}$ for all $i\in\mathcal{V}$.
In this case, $\cX_i$ are different though the sample sizes are identical.

\subsection{MSPBE saddle-point reformulation}
As in \cite{wai2018multi, cassano2020multi}, we adopt the conjugate form of the original $\bm{C}^{-1}$-norm, i.e.,
$$
\frac12 \Vert\bm{A}\bm{\theta}-\bm{b}\Vert^2_{\bm{C}^{-1}} = \max_{\bm{\omega}} \big( \bm{\omega}^\T(\bm{A}\bm{\theta}-\bm{b}) - \frac12\bm{\omega}^\T\bm{C}\bm{\omega} \big),
$$
and rewrite \eqref{equ:2-2:primal-problem} as
\bee\begin{aligned}\label{equ:2-3:primal-dual-problem}
&\min_{\bm{\theta}_i} \max_{\bm{\omega}_i} J(\bm{\theta}_i, \bm{\omega}_i)=\frac{1}{{m}} \sum_{i=1}^n\sum_{p=1}^{m_i} J_{i,p}(\bm{\theta}_i, \bm{\omega}_i)\\
&\text{subject to}\ \ \bm{\theta}_1=\ldots=\bm{\theta}_n, \bm{\omega}_1=\ldots=\bm{\omega}_n
\end{aligned}\ene
where $J_{i,p}(\bm{\theta}_i, \bm{\omega}_i) = \bm{\omega}_i^\T(\widehat{\bm{A}}_{i,p}\bm{\theta}_i-\widehat{\bm{b}}_{i,p}) - \frac12\bm{\omega}_i^\T\widehat{\bm{C}}_{i,p}\bm{\omega}_i + \frac \rho2 \Vert \bm{\theta}_i\Vert^2.$

Clearly, 
$
\nabla_{\bm{\theta}_i} J_{i,p}(\bm{\theta}_i,\bm{\omega}_i) = \widehat{\bm{A}}_{i,p}^\T\bm{\omega}_i + \rho \bm{\theta}_i,
\nabla_{\bm{\omega}_i} J_{i,p}(\bm{\theta}_i,\bm{\omega}_i) = \widehat{\bm{A}}_{i,p}\bm{\theta}_i- \widehat{\bm{C}}_{i,p}\bm{\omega}_i-\widehat{\bm{b}}_{i,p}.
$ 
Throughout the paper, let $\vec z_i=[\bm{\theta}_i; \bm{\omega}_i]$, $\cM_i=\{1,\ldots,m_i\}$, and 
$\nabla J_{i, p_i}(\vec z_i) = [\nabla_{\bm{\theta}_i} J_{i,p_i}(\bm{\theta}_i,\bm{\omega}_i);   
-\nabla_{\bm{\omega}_i} J_{i,p_i}(\bm{\theta}_i,\bm{\omega}_i)]$.

\section{The APP-SAG and linear convergence}\label{bigsec3}
\subsection{The APP-SAG}\label{description}

\renewcommand{\algorithmicrequire}{\textbf{Initialize}} 
\renewcommand{\algorithmicrepeat}{\textbf{Repeat}} 
\renewcommand{\algorithmicuntil}{\textbf{Until}} 
\begin{algorithm}[t!]
	\caption{The APP-SAG  -- from the view point of node $i$}
	\label{alg:proposed}
	\begin{algorithmic}[1]
	\REQUIRE Set arbitrary $\vec z_i$, and let $\hat{\vg}_{i,p}\leftarrow\nabla J_{i,p}(\vec z_i)$ for all $p\in\cM_i$. 
	\STATE Compute a scaled SAG  
	 $\vec y_i\leftarrow\frac1{{m}}\sum_{p=1}^{m_i}\nabla J_{i,p_i}(\vec z_i)$ and create local buffers $\mathcal{Z}_i,\mathcal{Y}_i$.  
	\STATE Let $\tilde{\vec z}_i\leftarrow\vec z_i$, $\tilde{\vec y}_i\leftarrow\vec y_i/|\mathcal{N}_{out}^i|$, and broadcast $\tilde{\vec z}_i$ and $\tilde{\vec y}_i$ to all out-neighbors.
	\REPEAT
	  \STATE  Keep receiving $\tilde{\vec z}_j$, $\tilde{\vec y}_j$ from in-neighbors and save to $\mathcal{Z}_i$ and $\mathcal{Y}_i$, respectively, until being activated for a new update.
	  \STATE  Query the local buffers to compute
		\bea
		\label{gg4}
		\vec z_i & \leftarrow\mbox{avg}(\mathcal{Z}_i), \vec y_i \leftarrow \mbox{sum}(\mathcal{Y}_i),
		\ena
		where $\text{avg}(\cdot)$ and $\text{sum}(\cdot)$ return the average and sum over their arguments respectively.
	  \STATE
	 Randomly pick a sample indexed by $p_i\in \cM_i$ from the local sample set $\cX_i$ and update as 
	  \begin{align}
		\label{gg5}
	  \vec{g}_{i} &\leftarrow \nabla J_{i,p_i}(\vec z_i), \\
	  \label{gg6}
	  \vec y_i & \leftarrow \vec y_i+\frac1{{m}}\vec{g}_{i} -\frac1{{m}}\hat{\vg}_{i,p_i},\\
	  \label{gg7}
	  \hat{\vg}_{i,p_i}&\leftarrow \vg_i,
	  \end{align}
	  \STATE Update the messages to send
	  \begin{equation}\begin{aligned}\label{algo_update}
			\tilde{\vec z}_i \leftarrow
			\vec z_i -\begin{bmatrix}\eta_1\bm{I}_d &\bm{0} \\\bm{0}& \eta_2 \bm{I}_d\end{bmatrix}\vec y_{i},\ 
			\tilde{\vec y}_i\leftarrow\vec y_i/|\mathcal{N}_{out}^i|,
	  \end{aligned} 
	  \end{equation} 
	  where $\eta_1,\eta_2 >0$ are two positive stepsizes.
	  \STATE Broadcast $\tilde{\vec z}_i$ and $\tilde{\vec y}_i$
	   to all out-neighbors, and empty $\mathcal{Z}_i$ and $\mathcal{Y}_i.$
	\UNTIL {a stopping criteria is satisfied, e.g. $\Vert \vec y_i\Vert<\epsilon$ for a predefined $\epsilon>0$.}
	\end{algorithmic}
\end{algorithm}

In this subsection, we propose a fully asynchronous push-pull stochastic average gradient (APP-SAG) algorithm to solve \eqref{equ:2-3:primal-dual-problem}. See Algorithm \ref{alg:proposed} for details. Note that the push-pull strategy is introduced in  \cite{pu2021push, xin2018linear} to solve the gradient-based distributed optimization problems in a synchronous way.

We first explain how to implement Algorithm \ref{alg:proposed} in a fully asynchronous way over a directed  network.  Every node keeps receiving $\tilde{\vec z}_j$ and $\tilde{\vec y}_j$ from its in-neighbors and respectively copying them to its local buffers $\mathcal{Z}_i$ and $\mathcal{Y}_i$, both of which may contain multiple receptions from a single in-neighbor. 
If node $i$ starts to update at {\em any} time, it randomly picks a sample\footnote{It also allows to use a random mini-batch where the SAG can be obtained via the mini-batch size.} from its local sample set $\cX_i$  and queries its buffers to perform \eqref{gg4}-\eqref{algo_update}. 
Then, it broadcasts the updated vectors $\tilde{\vec z}_i$ and $\tilde{\vec y}_i$ to all out-neighbors, which may also be subject to unpredictable but bounded delays, and empties both buffers. Clearly, there is no synchronization between nodes and is fully asynchronous, which is in sharp contrast with the gossip-based model \cite{lian2017asynchronous, xu2017convergence,assran2021asynchronous} as the latter involves a pair of neighbors to concurrently update. 

Now, we explain the key idea of APP-SAG.  Inspired by our previous work \cite{zhang2019asynchronous}, we use local buffers to handle asynchronicity issues and the \emph{push-pull step} in \eqref{gg4} for directed networks, respectively. 
However, \cite{zhang2019asynchronous} does not involve any dual problem in \eqref{equ:2-3:primal-dual-problem} and the SG issues. As also noticed by \cite{xin2020general}, the use of SGs is non-trivial in the distributed setting and is more involved than that in the centralized setting. Particularly, a direct use of SG in distributed gradient methods cannot ensure exact convergence \cite{xin2020general}.

Thus, we extend the centralized SAG \cite{schmidt2017minimizing} to the fully asynchronous distributed setting over directed networks. Specifically, consider using the SAG method to solve a finite sample average problem, i.e.,
$$
\min_{\vx\in\bR^d} f(\vx)=\frac{1}{m} \sum_{p=1}^m f_p(\vx).
$$
Then, the SAG at the $k$-th iteration is $\vh^k=\frac{1}{m}\sum_{p=1}^m\hat{\vg}^k_p$ where $\hat{\vg}^k_p$ is  updated as
\begin{align*}
	\left\{
	\begin{array}{ll}
		\hat{\vg}^k_{p}\leftarrow \nabla f_{p}(\vx^k), & \text{if}\ p=p^k,\\
		\hat{\vg}^k_{p}\leftarrow \hat{\vg}^{k-1}_{p}, & \text{if}\ p\neq p^k,
	\end{array}
	\right.
\end{align*}
and $p^k$ is randomly chosen from sample set. That is, the SAG is iteratively updated as
\begin{equation}\label{sagin}
\vh^k\leftarrow\vh^{k-1}+\frac{1}{m}\left(\nabla f_{p^k}(\vx^k)-\hat{\vg}^{k-1}_{p^k}\right).
\end{equation}
In comparison, $\vec y_i$ in Algorithm \ref{alg:proposed} aims to distributedly track the scaled SAG of  \eqref{equ:2-3:primal-dual-problem} in a fully asynchronous way. Jointly with \eqref{algo_update}, our distributed algorithm then resembles the centralized SAG, which enjoys the exact linear convergence. Though more technical, one can follow a similar idea in \cite{zhang2019asynchronous} to show that $\vy_i$ is able to asymptotically reach consensus on a scaled value of $\bar{\vy}=\sum_{i=1}^n \vy_i$. Thus, we only need to check that $\bar{\vy}$ is in fact the SAG of  \eqref{equ:2-3:primal-dual-problem}. This can be done via mathematical induction arguments. Notice from the initialization of Algorithm \ref{alg:proposed}, that $\bar{\vy}\leftarrow\frac1{{m}}\sum_{i=1}^n\sum_{p=1}^{m_i} \nabla J_{i,p}(\vec z_i)$. Now, suppose that $\bar{\vy}$ is already an SAG. If some node $i$ has completed a new update in Algorithm \ref{alg:proposed} and there is no transmission delay between nodes, it follows from \eqref{gg4}-\eqref{gg7} and the definition of $\tilde{\vy}_i$ that 
\bee\label{sumy}
\bar{\vy}\leftarrow \bar{\vy}-\vy_i + \sum\nolimits_{j\in \mathcal{N}_{out}^i}\tilde{\vy}_i = \bar{\vy}+\frac{1}{{m}}(J_{i,p_i}(\vec z_i)-\hat{\vg}_{i,p_i})
\ene
where $\hat{\vg}_{i,p_i}$ is given in \eqref{gg6} and is the latest SG before this update. In view of \eqref{sagin}, it is clear that $\bar{\vy}$ is indeed the SAG of \eqref{equ:2-3:primal-dual-problem}.

Note that the local buffers are only used to illustrate the key ideas, and can be removed in practice as \eqref{gg4} can be recursively computed. The use of SAG requires each node to store $m_i$ SGs, to compute only one SGs per iteration.

\subsection{Linear convergence}

Under the following assumptions, our main theoretical result rigorously quantifies the linear convergence of Algorithm \ref{alg:proposed}.
\begin{assum}\label{assum1}
	\begin{enumerate}[(a)]
		\item The digraph $\cG$ is strongly connected, i.e., any pair of nodes can be connected via a sequence of consecutive directed edges.
		\item 
		Let $\mathcal T=\{ t^k \}_{k\ge 1}$ be an increasing sequence to record the updating time instants of all nodes, e.g., 
		$t^k \in \mathcal T$ if and only if one node\footnote{If more than one node start their updates exactly at the same moment, we can simply regard them as updating successively, which will not violate our analysis in section \ref{aug}.}
 starts a new update at time instant $t^k$.	There exists a constant $b>0$ such that every node $i$  at least completes a new update, which is also received by each of its out-neighbors, in the time interval $[t^{k},t^{k+b})$ for all $t^k \in \mathcal T$.
	\end{enumerate}
\end{assum}

\begin{assum}\label{assum2}
	\begin{enumerate}[(a)]
		\item The sample size is sufficiently large such that $\widehat{\bm{A}}$ is full-rank and $\widehat{\bm{C}}$ is positive-definite.
		\item Each sample in the local sample set $\cX_i$ is selected at least once in every $K$ iterations.
	\end{enumerate}
\end{assum}
Assumption \ref{assum1}(a) is necessary, since otherwise there exist some nodes unable to be accessed by other nodes. Assumption \ref{assum1}(b) is easy to satisfy in practice if the communication delays are bounded and each node only takes finite time to complete one update. If the time interval between two consecutive updates is infinite for some node, it implies that such a node cannot participate in the learning process. 
It should be noted that the implementation of Algorithm \ref{alg:proposed}  does not depend on the parameter $b$.  Assumption \ref{assum2}(a) ensures the existence of a unique optimal solution for \eqref{equ:2-3:primal-dual-problem}.
Assumption \ref{assum2}(b) is also simple and can be easily satisfied with a random reshuffle rule \cite{haochen2019random}.

Let the optimal solution of \eqref{equ:2-3:primal-dual-problem} be $\vz^*$ and denote the latest value of $\vz_i$ before $t^k$ in
node $i$ by $\vz_i^k$.  Our main theoretical result is given below.

\begin{theorem}{\label{main_theo}}
	Let $\eta=\eta_1, \zeta=\eta_2/\eta_1$.
	Suppose that Assumptions \ref{assum1}-\ref{assum2} hold and
	$$\eta \in \left(0, \frac{\alpha \kappa^4(1-\kappa)^2}{72\beta^3 n^3 b^6 K^3\tilde{t}^2}\right),\ \zeta>\frac{4\rho+4\lambda_{\text{max}}(\widehat{\bm{A}}^\T\widehat{\bm{C}}^{-1}\widehat{\bm{A}})}{\lambda_{\text{min}}(\widehat{\bm{C}}^{-1})}.$$
	Then,
	$\|\vz_i^k-\vz^*\| $ 
	converges to zero at a linear rate $\mathcal{O}(c^k)$,
	where 
	$c=\max\left\{\sqrt[\tilde{t}+1]{\frac12 + \kappa^{-1}\mu n}, \sqrt[b+1]{1 - \eta \alpha\kappa n/2} \right\}$,
	and $k$ is increased by one no matter on which node has completed an update. The positive constants $\alpha, \beta, \kappa, \tilde{t}, \mu$ are given in Appendix A.1, and $\kappa^{-1}\mu n<1/2$ by Corollary \ref{coro1}.
	\end{theorem}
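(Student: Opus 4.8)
The plan is to reformulate the fully asynchronous APP-SAG iteration as a single-timescale linear(-ish) recursion on an augmented state space by invoking the push/pull augmented digraph construction promised in Section \ref{aug}, and then to track a suitably chosen Lyapunov quantity that couples the consensus error, the gradient-tracking error, and the optimality gap. Concretely, I would first collect, at each global counter value $k$ (incremented whenever any node updates), all the in-transit messages $\tilde{\vec z}_j,\tilde{\vec y}_j$ on the network edges together with the nodes' local $\vec z_i,\vec y_i,\hat{\vg}_{i,p}$ into a stacked vector; the delayed messages become the states of virtual nodes, so that the $\vec z$-update \eqref{gg4}+\eqref{algo_update} is driven by a \emph{pull} (row-stochastic) augmented matrix and the $\vec y$-update \eqref{gg4}+\eqref{gg6}+\eqref{algo_update} by a \emph{push} (column-stochastic) augmented matrix. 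Under Assumption \ref{assum1}(b) (and the maximum delay bound $\tilde t$), products of $b$ (resp.\ $\tilde t+1$) consecutive augmented matrices are primitive, which is what yields geometric contraction of the consensus directions with the stated exponents $1/(b+1)$ and $1/(\tilde t+1)$ in $c$.

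Next I would establish the ``$\bar{\vy}$ is the exact SAG'' invariant rigorously by the induction already sketched around \eqref{sumy}, extended to handle delays: the column-stochasticity of the push augmented matrix guarantees $\sum_i \vy_i$ plus all in-transit $\tilde{\vec y}$ mass is conserved up to the increment $\frac1m(\nabla J_{i,p_i}(\vec z_i)-\hat{\vg}_{i,p_i})$, so the total $\vec y$-mass equals $\frac1m\sum_{i,p}\hat{\vg}_{i,p}$ at all times. Then, writing $\vec z_i^k - \vec z^*$ via the averaged iterate $\bar{\vec z}^k$, I would split the error into (i) $\|\vec z_i^k-\bar{\vec z}^k\|$ (consensus error, contracted by the pull matrix but perturbed by the stepsize-weighted $\vec y$), (ii) $\|\bar{\vec z}^k - \vec z^*\|$ (optimality gap, which descends like a centralized SAG step because of the exact-tracking invariant, using strong convexity–concavity of $J$ in $\vec z$ — here Assumption \ref{assum2}(a) gives the needed saddle-point regularity and the condition on $\zeta=\eta_2/\eta_1$ makes the primal-dual map a contraction), and (iii) the gradient-tracking / SAG-memory staleness error $\|\vy_i^k - (\text{scaled SAG})\|$, which is contracted by the push matrix and fed by the change in iterates and in the stored $\hat{\vg}$'s — Assumption \ref{assum2}(b) bounds how stale any $\hat{\vg}_{i,p}$ can be (at most $K$ iterations), contributing the $K$-dependence.

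Assembling these into a vector inequality $V^{k+1} \le M V^k$ for a $3$-dimensional (or block) nonnegative $V^k=[\text{(i)};\text{(ii)};\text{(iii)}]$ with a matrix $M=M(\eta,\zeta,n,b,K,\tilde t,\kappa,\alpha,\beta,\mu)$, I would show $\rho(M)\le c<1$ for $\eta$ in the stated interval by a small-gain / spectral-radius estimate (bounding $\rho(M)$ via a weighted $\infty$-norm, which is where the explicit constant $\frac{\alpha\kappa^4(1-\kappa)^2}{72\beta^3 n^3 b^6 K^3\tilde t^2}$ comes out); Corollary \ref{coro1} supplies $\kappa^{-1}\mu n<1/2$, ensuring the first branch of $c$ is genuinely in $(0,1)$. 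Finally, iterating $V^{k+1}\le M V^k$ and passing from the augmented/per-counter bound back to $\|\vec z_i^k-\vec z^*\|$ gives the $\mathcal O(c^k)$ rate.

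The main obstacle I anticipate is controlling the interaction of \emph{three} coupled error sources across the \emph{two different timescales} simultaneously — the pull dynamics need $b$ steps to mix while the push dynamics need $\tilde t+1$ steps, and the SAG memory introduces a third delay of order $K$ — so the contraction matrix $M$ must be built on a horizon long enough to see all three kinds of mixing, and getting the off-diagonal stepsize-dependent entries of $M$ small enough (hence the very small admissible $\eta$) without letting the horizon blow up the constants is the delicate bookkeeping. A secondary subtlety is that APP-SAG's $\vec y$-tracked quantity is a \emph{stochastic} average gradient, so the per-step ``descent'' in coordinate (ii) is only exact in the SAG sense (deterministic once the sample $p_i$ is fixed), and I must phrase the whole argument as a worst-case deterministic recursion over arbitrary admissible sample/activation sequences rather than in expectation — consistent with the deterministic rate claimed in the theorem.
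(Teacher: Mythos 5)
Your proposal follows essentially the same route as the paper's proof: the push/pull augmented-digraph single-timescale reformulation, the conservation of the summed $\vec y$-mass as the exact scaled SAG, a coupled system of consensus/tracking/optimality errors, and a worst-case small-gain (spectral-radius) argument over a horizon long enough to cover $b$, $\tilde t$ and $bK$, yielding the deterministic rate $c$. The paper's implementation differs only in bookkeeping: it tracks a fourth quantity $\Vert \bm{Y}^k\Vert_\sF$, measures the optimality gap at the average weighted by the absolute probability sequence $\vu^k$ and rescales $\bm{Y}^k$ by $(\bm{V}^k)^\dag$ (since the mixing matrices are only row-/column-stochastic, the plain average $\bar{\vec z}^k$ is not directly contractive), and formalizes your long-horizon contraction matrix via the $\lambda$-sequence technique of Lemma \ref{lemma-17} applied to the multi-step inequalities \eqref{ineq1}--\eqref{ineq4}.
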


\begin{remark}\label{rem-zeta}
	It should be noted that
	$\eta_2=\eta\zeta\in
	\left(0, \frac{2m\beta}{\psi}\eta\right),$ where $\psi=\lambda_{\text{max}}(\widehat{\bm{C}})$. See Lemma \ref{lemma0-1} in Appendix \ref{preliminaries} for details.
\end{remark}

We emphasize that the deterministic convergence rate in Therorem \ref{main_theo} is established from the worst-case point of view, which is quite different from the random gossip scheme \cite{lian2017asynchronous, xu2017convergence,assran2021asynchronous}.  More importantly, $k$ is unknown to any node. In synchronous algorithms, $k$ is known to each node and is increased only if all nodes have completed their new updates, which largely confirms the computational efficiency of APP-SAG.

The proof of Theorem \ref{main_theo} is quite lengthy and its details are postponed to Appendix A. In Section \ref{aug}, we develop an augmented system approach to provide some key results to the proof.
\section{The push-pull augmented graph approach}\label{aug}

\begin{figure}[!t]
	\centering
	\captionsetup{justification=raggedright}
	\subfloat[]{\label{fig1-sec1}\includegraphics[width=0.3\linewidth]{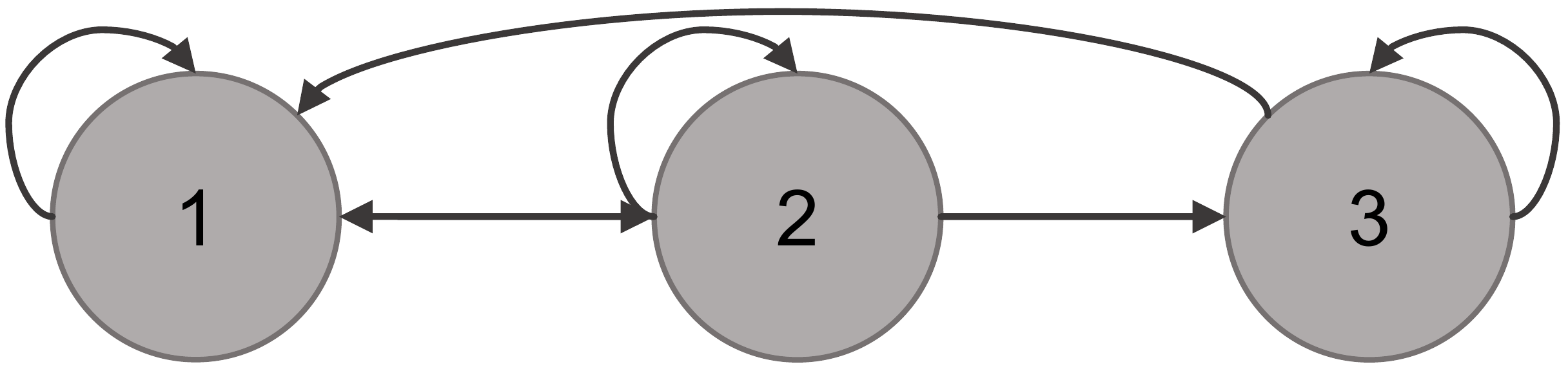}}\\
		\subfloat[]{\label{fig1-sec2}\includegraphics[width=0.28\linewidth]{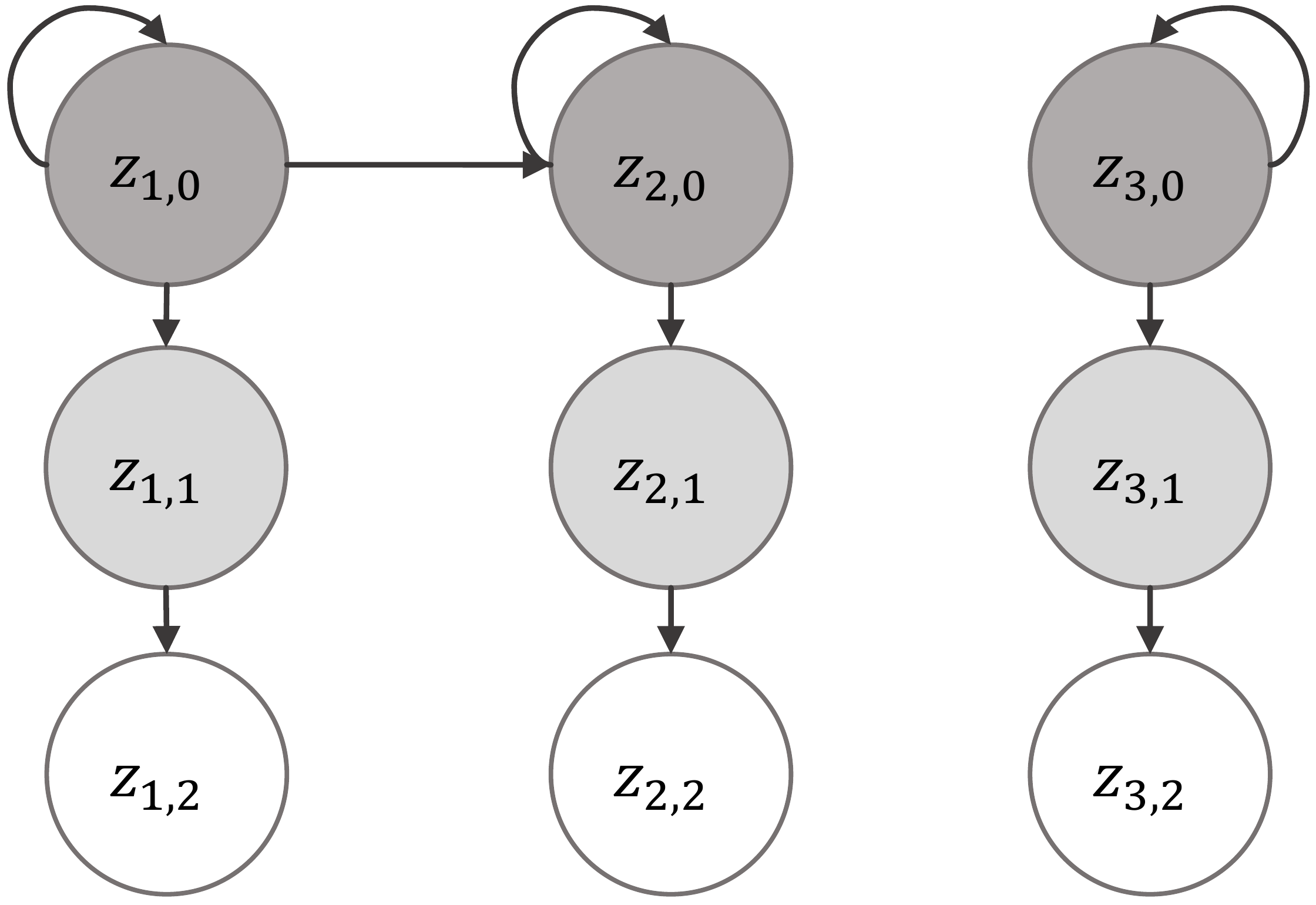}}
		\subfloat[]{\label{fig1-sec3}\includegraphics[width=0.22\linewidth]{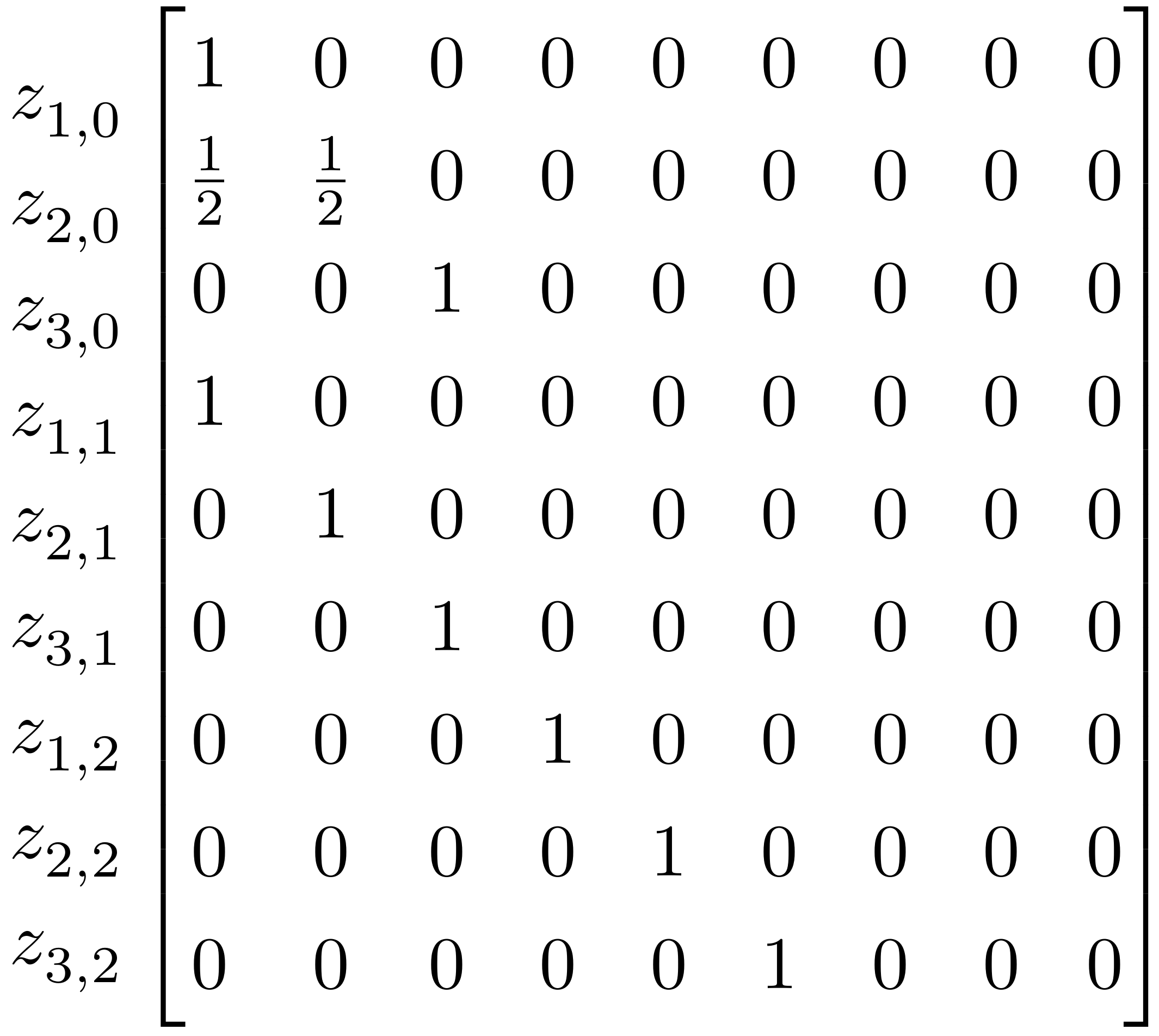}}
		\subfloat[]{\label{fig1-sec4}\includegraphics[width=0.28\linewidth]{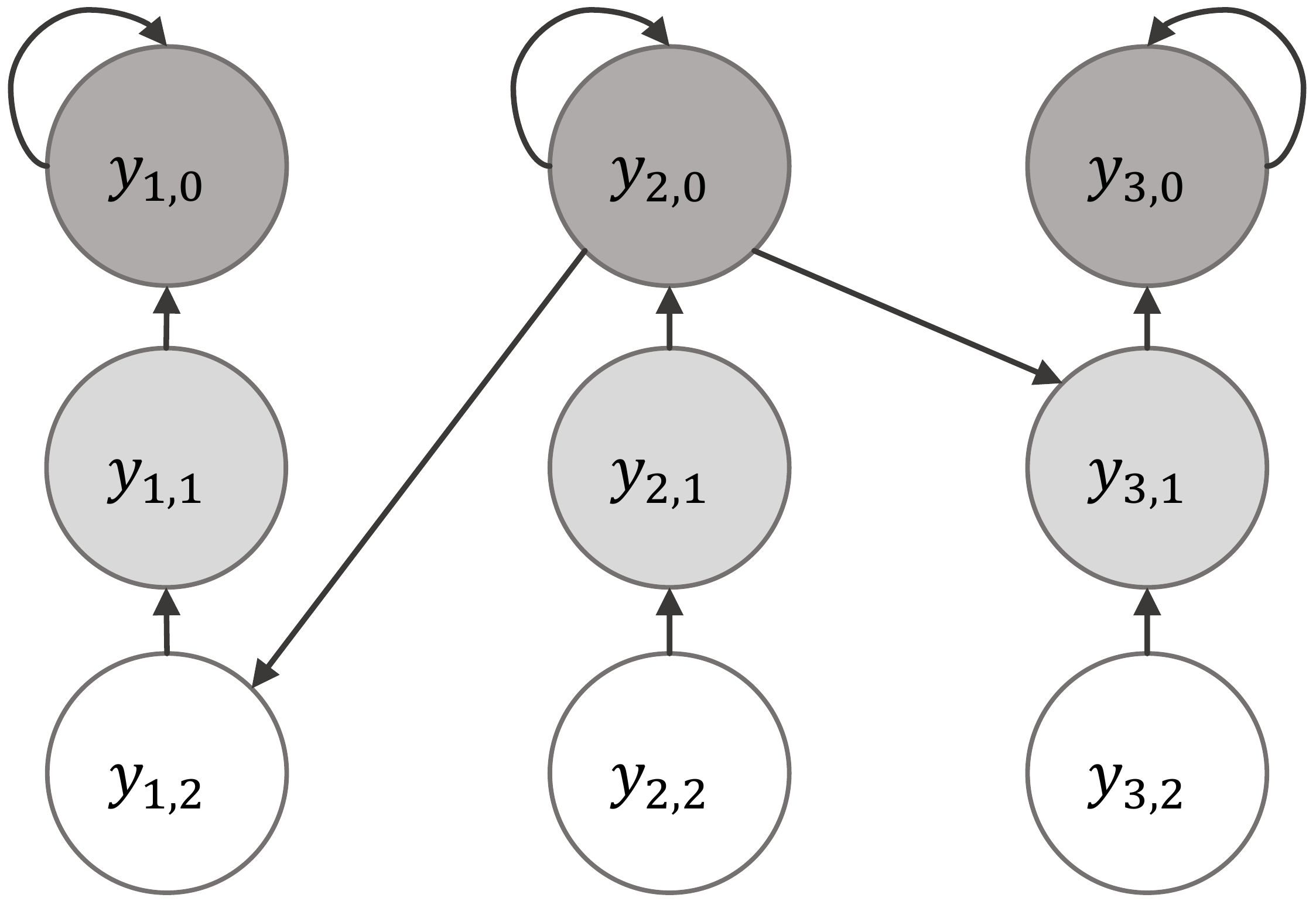}}
		\subfloat[]{\label{fig1-sec5}\includegraphics[width=0.22\linewidth]{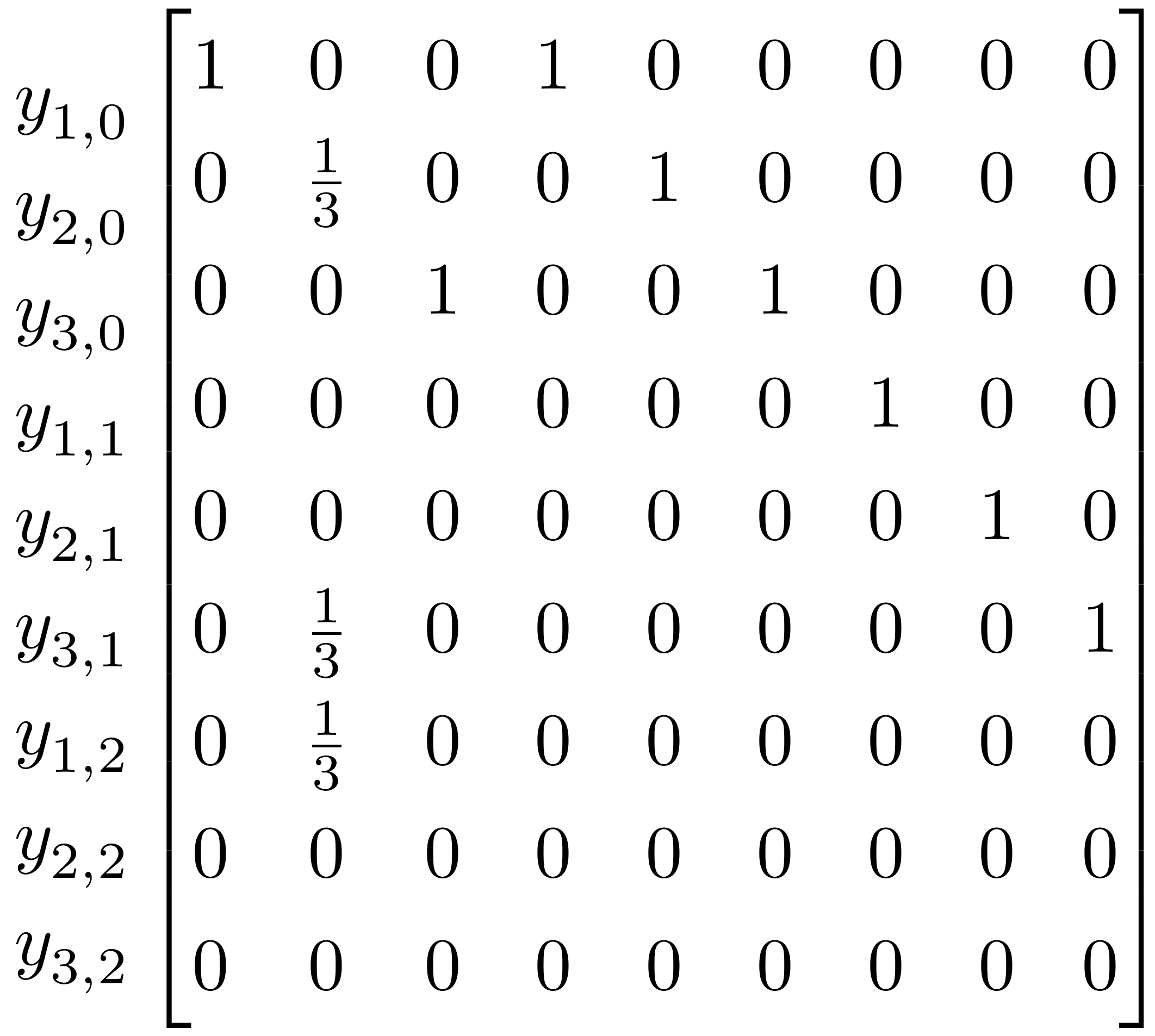}}
	\caption{Consider an example with $n=3, b=2$ with the original graph in (a). Suppose at $t^k$, the activated node $2$ uses $\vec z_2^{k-1}$ from itself and $\vec z_1^{k-1}$ from node $1$'s latest update
	to compute and send out $\vec z_2^k$ and $\vec y_2^k$, which are later used by node $3$ and $1$ at $t^{k+2}$ and $t^{k+3}$, respectively. 
	 (b) and (d) illustrate $( {\mathcal{V}}_z,  {\mathcal{E}}_z^k)$ and $( {\mathcal{V}}_y,  {\mathcal{E}}_y^k)$ at time $t^k$.
	 (c) and (e) provide the corresponding $ {\bm{H}}_{\sR}^k$ and $ {\bm{H}}_{\sC}^k$. Clearly, $ {\bm{H}}_{\sR}^k$ and $ {\bm{H}}_{\sC}^k$ are row- and column-stochastic matrices, respectively.
	}
	\label{fig1-sec}
\end{figure}

In this section, we develop a push-pull augmented graph approach to interpret Algorithm \ref{alg:proposed} via a single timescale, which is essential to the proof of Theorem \ref{main_theo} in Appendix. The graph augmentation is firstly adopt in \cite{nedic2010convergence} for consensus problems and then extended for distributed optimization problems in  \cite{zhang2019asyspa, assran2021asynchronous}. However, they only use one type of virtual nodes for each node of $\cG$, which is called the {\em pull} augmented digraph in this work. Also different from \cite{tian2020achieving} with augmented edges, we further design a {\em push} augmented digraph. Under such a push-pull augmented scheme, the push digraph aims to address $\vz_i$ while pull digraph is to address $\vy_i$ in Algorithm \ref{alg:proposed}.

\subsection{The push/pull augmented digraphs}

For each node $i$ of $\cG$, we reindex it as $ {z}_{i,0}$ and introduce $b$ virtual nodes $ {z}_{i,u}$, $u=1,\ldots, b$, where $b$ is given in Assumption \ref{assum1}(b). If node $i$ starts to update at time $t^k\in \cT$, we construct a {\em pull} augmented digraph $ {\mathcal G}_z^k= ( {\mathcal V}_z, {\mathcal E}_z^k)$ to address the asynchronicity and delays of $\vz_i$ in the proof, where $ {\mathcal V}_z=\{ {z}_{i, u}|1\le i\le n,0\le u\le b\}$ and $ {\mathcal E}_z^k$ includes all the directed edges $( {z}_{i, u},  {z}_{i, u+1})$ for $0 \le u\le b-1$.  In such an update, if node $i$ uses $\vz_j^{k-u}$ from node $j\in\cN_{in}^i$, then $( {z}_{j, u-1}, {z}_{i, 0})\in  {\mathcal E}_z^k$.  Note that $u\le b-1$,
and node $i$ may use multiple receptions from node $j$, which implies that more than one nodes in $\{ {z}_{j, u}|0\le u\le b\}$ are directly linked to $ {z}_{i, 0}$.  Moreover, each node of $ {\mathcal G}_z^k$ pulls information from its in-neighbors and uses their \emph{average} to update its state, which follows from the average operator in \eqref{gg4}.

To address $\vy_i$ in Algorithm \ref{alg:proposed}, we further construct a push 
augmented digraph $ {\mathcal G}_y^k= ( {\mathcal V}_y, {\mathcal E}_y^k)$ where the striking difference from $ {\mathcal G}_z^k$ lies in the information direction. Particularly, $ {\mathcal E}_y^k$ includes all the directed edges $( {y}_{i, u+1},  {y}_{i, u})$ for $0 \le u\le b-1$ and if node $i$ uses $\vy_j^{k-u}$ from node $j\in\cN_{in}^i$ at $t^k\in\cT$, then $({y}_{j, 0},  {y}_{i, u-1})\in  {\mathcal E}_y^{k-u}$.  To the contrary of the pull digraph, each node of $ {\mathcal G}_y^k$ pushes its latest state to its out-neighbors that are defined over $ {\mathcal E}_y^k$ and uses its in-neighbors' \emph{sum} to update its state, see the summation operator in \eqref{gg4}.

\subsection{Reformulation of Algorithm \ref{alg:proposed} via a single timescale}

With the aid of the push/pull digraphs, we are able to rewrite Algorithm \ref{alg:proposed}  via a single timescale.  To this end, let $\vec z_{i,u}^{k},\vec y_{i,u}^{k},1\le i\le n,0\le u\le b$ denote the latest state of nodes $ {z}_{i, u},  {y}_{i, u}$ before $t^{k+1}$.
Let $\mathcal T_i$ be an increasing sequence to record update time instants of node $i$.
Though it is tedious, one can follow our previous \cite{zhang2019asyspa} to show that there exist a row-stochastic matrix $ {\bm{H}}_{\sR}^k$ and a column-stochastic matrix $ {\bm{H}}^k_{\sC}$
such that Algorithm \ref{alg:proposed} is given by
\begin{align}\label{syn-update-a}
	{\bm{Z}}^{k+1}&= {\bm{H}}_{\sR}^k( {\bm{Z}}^k-\eta \bm{I}_a^k {\bm{Y}}^k),\\
\label{syn-update-b}
{\bm{Y}}^{k+1}& =  {\bm{H}}_{\sC}^k {\bm{Y}}^k+ \partial^{k+1}-\partial^k,
\end{align}
where
\bea\label{original_update}
{\bm{Z}}^{k}&=[\bm{Z}_0^{k};\ \ldots; \bm{Z}_n^{k}]\in\bR^{\tilde n\times 2d},\ 
&\bm{Z}_u^{k}&=[\vec z_{1,u}^{k};\ldots; \vec z_{n,u}^{k}]\bm{\Lambda}^{-1} \in\bR^{n\times 2d},\\
{\bm{Y}}^{k}&=[\bm{Y}_0^{k}; \ldots; \bm{Y}_n^{k}]\in\bR^{\tilde n\times 2d},\ 
&\bm{Y}_u^{k}&=[\vec y_{1,u}^{k}; \ldots; \vec y_{n, u}^{k}]\bm{\Lambda}  \in\bR^{n\times 2d},\\
\partial^k&=[\partial J^k; \ \bzero_{bn\times 2d}]\in\bR^{\tilde n\times 2d},\ 
&\partial J^k&=[\partial J_1^k; \ldots; \partial J_n^k]\bm{\Lambda} \in\bR^{n\times 2d},\\
\bm{\Lambda} &= [\bm{I}_d, \bm{0}; \bm{0}, \sqrt{\zeta} \bm{I}_d],
\ena
$$\begin{aligned}
\left[ {\bm{H}}_{\sR}^k\right]_{ij} &=
\left\{\begin{array}{ll}
	\frac{1}{|\mathcal{Z}_i^{k}|}, &  \text{if $j=nu+v, i,v\in\cV$,}
	\text{ $t^{k+1}\in\cT_i$,} \text{$i$ receives ${\vec z}_v^{k-u}$ at $t^{k+1}$},\\
	1,                            & \text{if $i\in\cV$, $t^{k+1}\notin\cT_i$, $j=i$,}                                                                     \\
	1,                            & \text{if $i\notin\cV$ and $j=i-n$,}                                                                                     \\
	0,                            & \text{otherwise,}
\end{array}\right.\\
\end{aligned}$$
$$\begin{aligned}
\left[ {\bm{H}}_{\sC}^k\right]_{ji} &=
\left\{\begin{array}{ll}
	\frac{1}{|\cN_{out}^i|},&  \text{if $j=nu+v,i,v\in\cV$,}
	\text{ $t^{k+1}\in\cT_i$,}\ \text{$v$ receives ${\vec y}_i^k$ at $t^{k+u}$},\\
	1,                            & \text{if $i\in\cV$, $t^{k+1}\notin\cT_i$ and $j=i$,}                                                                        \\
	1,                            & \text{if $i\notin\cV$ and $j=i-n$,}                                                                                        \\
	0,                            & \text{otherwise,}
\end{array}\right.\\
\end{aligned}$$
where $|\mathcal{Z}_i^{k}|$ is the number of receptions in buffer $\mathcal{Z}_i$ at time $t^{k+1}$. Moreover,
\bee\label{iak}
\begin{aligned}
\left[\bm{I}_a^k\right]_{ij}=\left\{\begin{array}{ll}
	1, & \text{if $i=j$, $i\in\cV$, } \text{and $t^{k+1}\in\cT_i$},\\
	0, & \text{otherwise,}
\end{array}\right.
\end{aligned}
\ene
and $ {\bm{Z}}^0=[\bm{Z}_0^0; \bzero_{2m\times bn}] $, $ {\bm{Y}}^0=[\partial J^0; \bzero_{2m\times bn}]$. 
	
\subsection{The distributed SAG understanding of Algorithm \ref{alg:proposed}}	
		The major difference from \cite{zhang2019asynchronous}  lies in the novel use of the SAG. Note that their work does not consider the SG issues. Recall from \eqref{sumy} that the summation of $\vy_i$ aims to track the scaled SAG of $J(\cdot)$. Now, we confirm this finding in details. 
		
		If $t^{k}\in\cT_i$ and node $i$ has selected the sample $p_i$ in Algorithm \ref{alg:proposed}, let $\tau_{i,p_i}^{k}=k$, which is the latest iteration index for using the sample $p_i$. Accordingly, define
		\bea\label{tau-define}
		\left\{\begin{array}{ll}
			\tau_{i,p}^{k}=k, & \text{if $t^{k}\in\cT_i$, and $p=p_i$},\\
			\tau_{i,p}^{k}=\tau_{i,p}^{k-1}, & \text{otherwise.}
		\end{array}\right.
		\ena
		and $\tau_{i,p}^{0}=0$ for all $p\in\cM_i$.
		Jointly with \eqref{original_update}, it implies that
		\bea\label{aggre-define}
		\partial J_i^k = \frac1{{m}}\sum_{p=1}^{m_i}\nabla J_{i,p}\left(\vec z_{i}^{\tau_{i,p}^k}\right).
		\ena
If $t^{k+1}\in\cT_i$, it is clear from \eqref{iak} that
$$\partial^{k+1}-\partial^k=\bm{I}_a^k(\partial^{k+1}-\partial^k).$$
By \eqref{tau-define} and \eqref{aggre-define}, the $i$-th row of $\partial^{k+1}-\partial^k$ is
$$\begin{aligned}
\partial J^{k+1}_i-\partial J^k_i = \frac1{{m}} \sum_{p=1}^{m_i} (\nabla J_{i,p}(\vec z_{i}^{\tau_{i,p}^{k+1}})-\nabla J_{i,p}(\vec z_{i}^{\tau_{i,p}^{k}}) )
= \frac1{{m}} (\nabla J_{i,p}(\vec z_{i}^{k+1}) - \nabla J_{i,p}(\vec z_{i}^{\tau_{i,p}^{k}})).
\end{aligned}$$

It follows from the initialization of Algorithm \ref{alg:proposed} that $\vec y_{i,0}^0=\partial J_i^0=\frac1{{m}}\sum_{p=1}^{m_i}\nabla J_{i,p}(\vec z_{i}^{0})$. Left-multiplying $\bone^\T_{\tilde{n}}$ on both sides of \eqref{syn-update-b} leads to that
\bea\label{lemma-tracking}
\bone^\T_{\tilde{n}} {\bm{Y}}^{k}=\bone^\T_{\tilde{n}}\partial^{k}=\frac1{{m}}\sum_{i=1}^n\sum_{p=1}^{m_i}\nabla J_{i,p}\left(\vec z_{i,p}^{\tau_{i,p}^k}\right).
\ena
By \eqref{original_update} and the construction of push/push digraphs, it follows that $\bone^\T_{\tilde{n}} {\bm{Y}}^{k}=\sum_{i=1}^n \vy_i^k$, if there is no transmission delays between nodes.

\section{Numerical results}\label{bigsec4}
\subsection{Experiment for MARL}\label{sec4-1}
We compare APP-SAG with PD-DistIAG \cite{wai2018multi} and FDPE \cite{cassano2020multi} to solve a real networked MARL problem similar to \cite{qu2020scalable}, with 9 users/agents and 4 access points involving, see Fig. \ref{fig2-sec1}. 
Each agent can communicate with access points on the corner of its square area.
One agent keeps at most one information packet. 
At time $t$, agent $i$ receives a new packet with probability $p_{1,i}$.
The agent can then choose to keep it, with success probability $p_{2,i}$, or to send it to one of the available access points $l$. If simultaneously no other agents send to $l$, then the transmission will complete with probability
$q_l$ (dependent on $l$) and $i$ will get a local reward of $1$. Otherwise, if multiple agents send to the same access point $l$, all their transmissions will fail. 
In this experiment, we assume that the state of a single agent, $s_i\in \{0,1\}$ ($s_i$ is the package amount of the agent $i$), is observable to all the agents, and then the MDP obtains $2^9=512$ states. 

\begin{figure*}[!t]
	\centering
	\captionsetup{justification=raggedright}
	\subfloat[]{\label{fig2-sec1}\includegraphics[width=0.31\linewidth]{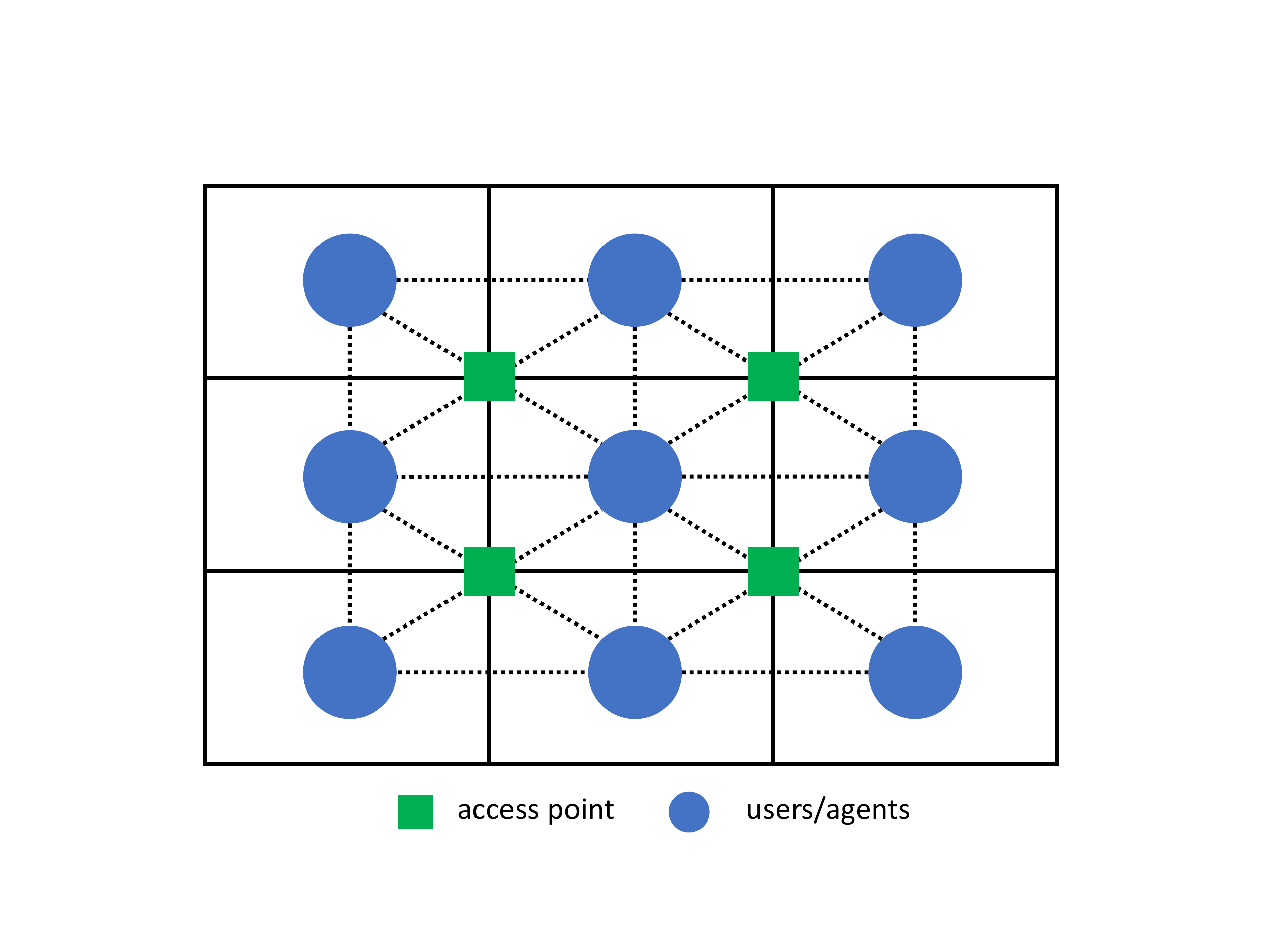}}
	\subfloat[]{\label{fig2-sec2}\includegraphics[width=0.33\linewidth]{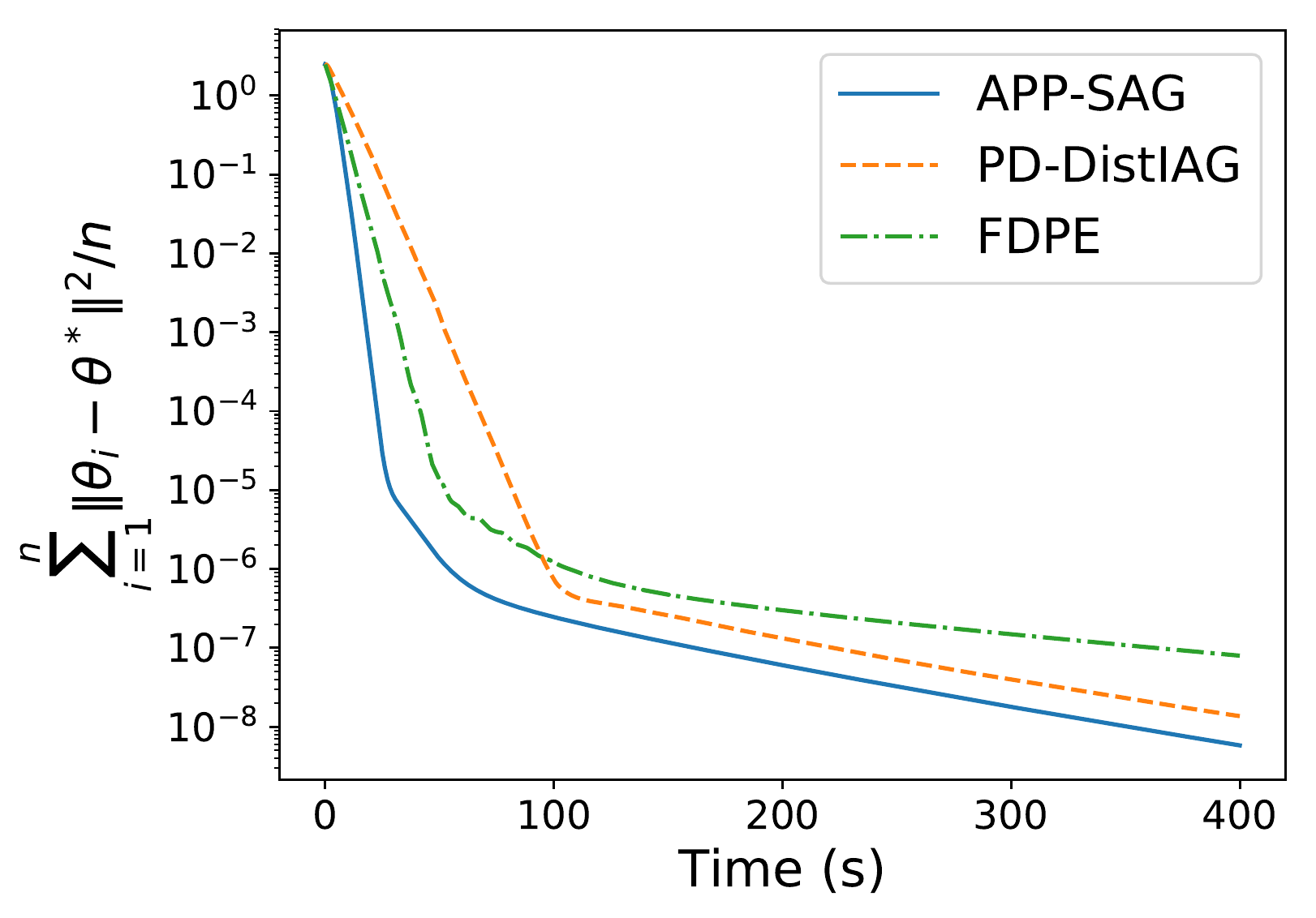}}
	\subfloat[]{\label{fig2-sec3}\includegraphics[width=0.33\linewidth]{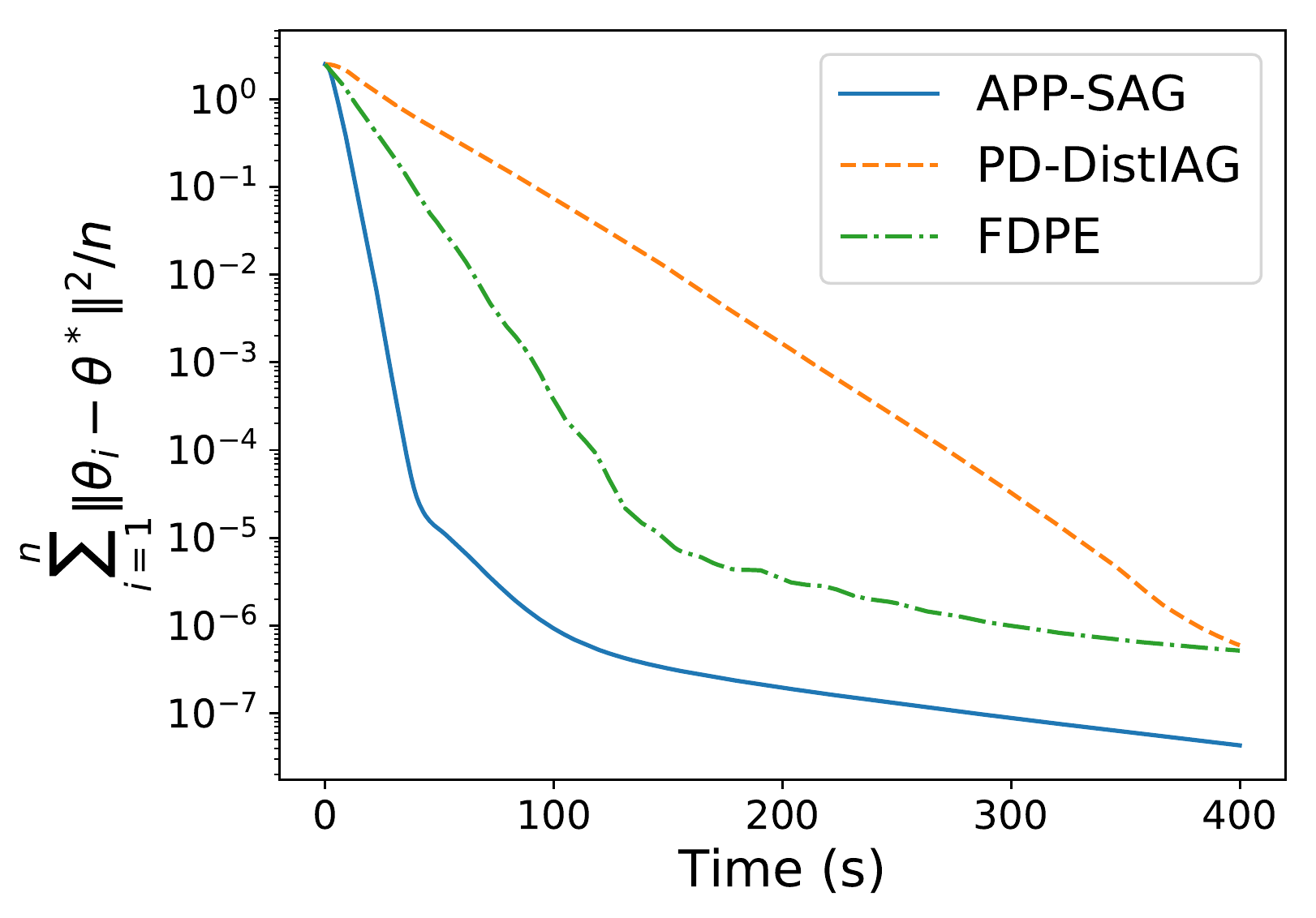}}
	\caption{(a) User-access problem with nine nodes and four access points, where dot lines connect pairs of neighbors in the undirected network. (b) Convergence performance of APP-SAG, PD-DistIAG and FDPE with similar cores. (c) 
	Convergence performance of APP-SAG, PD-DistIAG and FDPE with one core slowed down.}
	\label{fig-sec}
\end{figure*}

\begin{figure}[!t]
	\centering
	\includegraphics[width=0.4\linewidth]{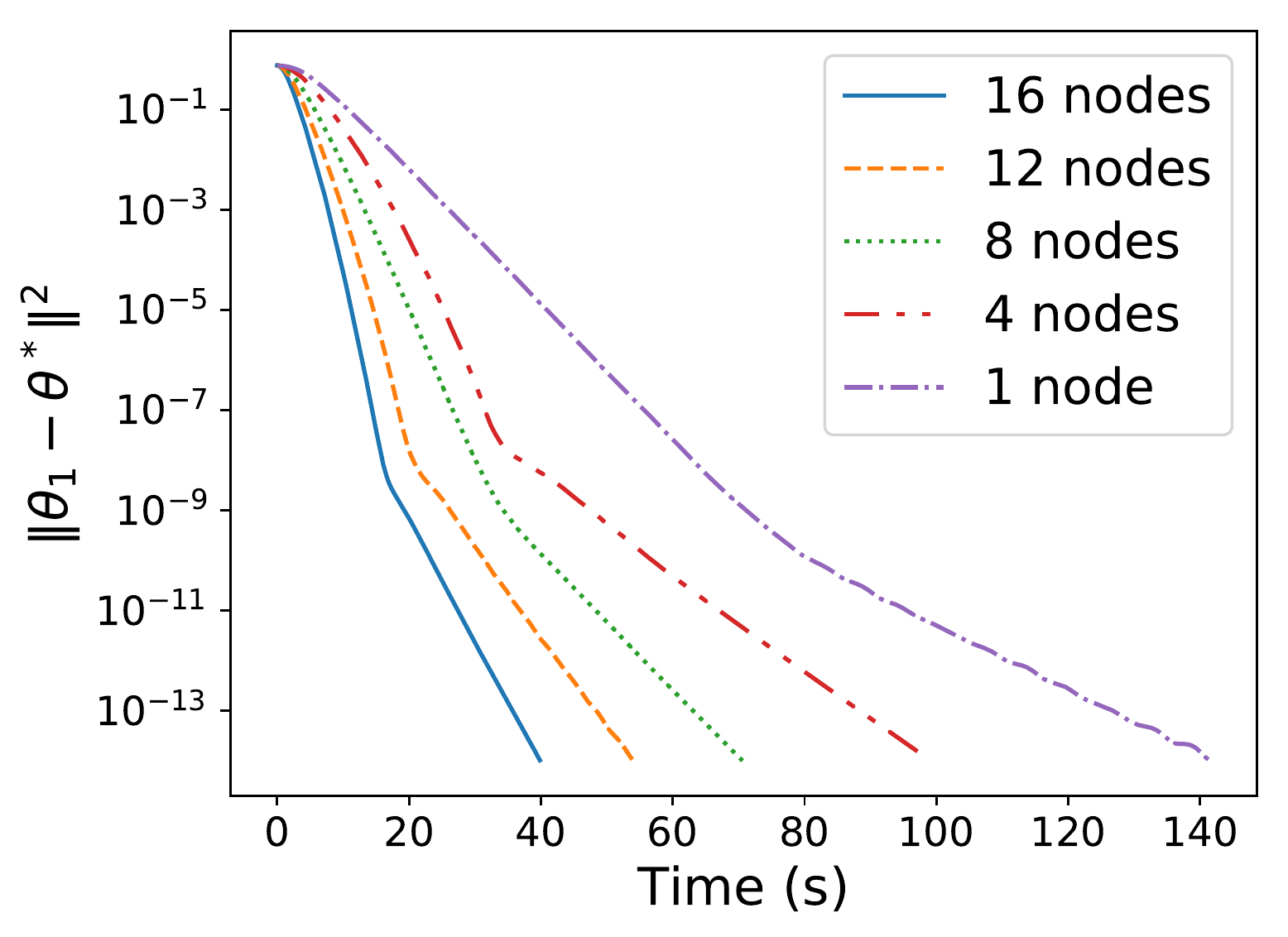}
	\caption{Speedup of APP-SAG. The speed for reaching a fixed error level increases approximately linearly with more nodes computing.}
	\label{fig-speedup}
\end{figure}

We adopt a simple Sarsa algorithm with feature number $d=10$ to learn a fixed policy. With the policy, we obtain trajectory experiences of states, actions and rewards of sample size $m_i=5000$.
We compare APP-SAG with PD-DistIAG and FDPE based on \texttt{OpenMPI} with nine CPU cores playing the roles of nodes. The network topology is undirected, see Fig. \ref{fig2-sec1}. Note that PD-DistIAG and FDPE only work on undirected networks. The batch size is $1$, and the stepsizes for primal/dual vectors of APP-SAG, PD-DistIAG and FDPE
are $[\eta_1,\eta_2]=[1.4\times10^{-5},2.9\times10^{-4}]$, $[\eta_1,\eta_2]=[1.8\times10^{-5},8\times10^{-4}]$, and $[\eta_1,\eta_2]=[1.1\times 10^{-5}, 3.8\times10^{-4}]$, respectively, which are manually tuned for the best performances.

In Fig. \ref{fig2-sec2}, the performance of APP-SAG is consistent with our theoretical result and confirms the linear convergence to the optimal solution.
Then we slow down one fixed core (the upper-right one in Fig. \ref{fig2-sec1}) for both algorithms. Fig. \ref{fig2-sec3} demonstrates that APP-SAG maintains its performance while PD-DistIAG and FDPE suffer greatly from the slow core. 

\subsection{Experiment for parallel RL}

For parallel RL, we examine the speedup effect of APP-SAG, i.e., the more nodes the better convergence speed. On {mountaincar} task from \cite{sutton1998introduction},
the data with feature number $d=30$ and volume ${m}=24000$, are unevenly divided to $n$-node networks ($n=1,4,8,12,16$), with data volumes proportional to agent serial numbers. In the four-agent scenario, agent $1, 4$ takes $1/10$
and $4/10$ of the total transition samples, respectively.
In the network topology, each node $i$ sends information to node $\mbox{mod}(2^j+i,n)$, where $0\le j < [\log_2(n)]$. The topology is directed with a connectivity of $\mathcal{O}(\log(n))$. Note that FDPE does not work in this directed network.

We set the local batch size as $64/n$, so that the problem has a fixed computational workload for networks with different numbers of nodes.
The stepsizes are tuned to empirically perform best for all the node numbers, $1.1\times 10^{-4}, 9.6\times10^{-5}, 1.3\times10^{-4}, 1.5\times10^{-4}, 1.9\times10^{-4}$ for $\eta_1$ of node number $16, 12, 8, 4, 1$, respectively,
while $\eta_2/\eta_1=\zeta\approx7.0$ are similar among experiments for different node numbers.
Fig. \ref{fig-speedup} shows the convergence curves of node $1$ for APP-SAG under different node numbers, which verifies the APP-SAG's linear speed-up property.

\section{Conclusion}\label{bigsec5}
This work has proposed a novel fully asynchronous algorithm for policy evaluation of DisRL over a directed network.
The striking feature allows each node to communicate with its neighbors and update its local variables locally at any time.
From the worst-case view, we show that APP-SAG converges linearly with respect to a newly introduced virtual counter. 
Simulation results on MARL and parallel RL examples have illustrated the significance of APP-SAG.  Since \eqref{equ:2-3:primal-dual-problem} has a specific form, our future works will consider using APP-SAG to solve general primal-dual problems.

\appendix

\section{Proof of Theorem 1}\label{prooj_theo}

\subsection{Notations and preliminaries}\label{preliminaries}
Before proving Theorem \ref{main_theo}, we provide some important properties. 
We first introduce an important concept called 
\emph{absolute probability sequence}. \cite[Theroem 4.2]{touri2012product} claims that 
for any sequence of row-stochastic matrices $\{\bm{Q}^k\}_{k=1}^t, t\in \bN$, there exists a sequence of stochastic vectors $\{\vu^k\}_{k=1}^{t+1}$ that satisfies
$
(\vu^{k+1})^\T \bm{Q}^k=(\vu^k)^\T,
\ \forall k\in\bN
$. Note that $\{\vu^k\}_{k=1}^{t+1}$ depends on the whole sequence $\{\bm{Q}^k\}_{k=1}^t$. In \eqref{syn-update-a}, $\{ {\bm{H}}_{\sR}^k\}$ is a row-stochastic matrices sequence and let $\{\bm{u}^k\}$ be the absolute probability sequence, i.e., $(\bm{u}^{k+1})^\T  {\bm{H}}_{\sR}^k=(\bm{u}^k)^\T$.
We use $\bm{u}^k$ to track the average of ${\bm{Z}}^k$ in the proof.
\par We let $\vec v^{k+1}= {\bm{H}}_\sC^{k}\vec v^{k}$ and $\vec v^0=[\bone_n; \bzero_{\tilde n-n}]$. 
Let $\bm{V}^{k}=\diag(\vec v^{k})$ and 
$(\bm{V}^{k})^\dag$ be the Moore-Penrose inverse of $\bm{V}^k$, i.e.,
$$
\left[ (\bm{V}^{k})^\dag \right]_{ij}= \left\{\begin{array}{ll}
	1/\left[ \bm{V}^k \right]_{ii},\ &\mbox{if}\ i=j,\ \left[ \bm{V}^k \right]_{ii}>0, \\ 
	0,\ &\mbox{otherwise.}\\
	\end{array}\right.
$$
We further define 
\bea\label{S_V}
\bm{I}_\sV^k=\bm{V}^k(\bm{V}^k)^\dag,\ \bone_\sV^k=\bm{I}_\sV^k\bone_{\tilde n},\ 
 {\bm{Y}}_\sV^k=(\bm{V}^{k})^\dag {\bm{Y}}^k,
\ena
where $ {\bm{Y}}^k$ is defined in \eqref{original_update}. 
In the proof, we use $\vec v^{k}$ to track the average of ${\bm{Y}}^k$.

\par In the following lemma, we show that the accumulative products of $ {\bm{H}}_{\sR}^k$ and $ {\bm{H}}_{\sC}^k$ converge to rank-one matrices linearly.
\begin{lemma}\label{lemma00} (\cite[Lemma 2]{zhang2019asynchronous})
Define the accumulative product of $ {\bm{H}}_{\sR}^k$ and $ {\bm{H}}_{\sC}^k$ as
$$\begin{aligned}
\bm{\Phi}_\sR^{k:k+t}&= {\bm{H}}_{\sR}^{k+t-1}\ldots  {\bm{H}}_{\sR}^{k+1} {\bm{H}}_{\sR}^k,\quad\ 
\bm{\Phi}_\sC^{k:k+t}&= {\bm{H}}_{\sC}^{k+t-1}\ldots  {\bm{H}}_{\sC}^{k+1} {\bm{H}}_{\sC}^k,\\
\end{aligned}$$
where $k\ge 0,t > 0$. If $t=0$, define $\bm{\Phi}_\sR^{k:k}=\bm{\Phi}_\sC^{k:k}=\bm{I}_{\tilde n}$.
\par Under Assumption 1, the following statements hold.
\begin{enumerate}[(a)]
\item{
	For all $k,t\ge 0$, there exist stochastic vectors $\bm{\phi}_\sR^{k:k+t},\bm{\phi}_\sC^{k:k+t}\in \mathbb{R}^{\tilde n}$ such that
	$$\begin{aligned}
	&\Vert \bm{\Phi}_\sR^{k:k+t} - \bone_{\tilde n}(\bm{\phi}_\sR^{k:k+t})^\T \Vert_\sF \le 2\delta^t,\quad\ 
	&\Vert \bm{\Phi}_\sC^{k:k+t} - \bm{\phi}_\sC^{k:k+t}\bone_{\tilde n}^\T \Vert_\sF \le 2\delta^t,\\
	\end{aligned}$$
	where $\delta = (1-\kappa)^{1/d_g b}$ and $\kappa = \left(1/{\tilde n}\right)^{d_g b}\in (0,1)$, $d_g$ is the diameter of $\cG$, i.e., the largest distance between any pair of nodes in $\cG$.
}
\item{
	For all $i\in \cV, j\in {\cV}_y, t\ge 0$, $\sum_{j=1}^n \left[ \bm{\Phi}_\sC^{0:t} \right]_{ij} \ge  n\kappa$.
	
}
\end{enumerate}
\end{lemma}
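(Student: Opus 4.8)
\emph{Proof plan.} The plan is to read the statement as the classical ergodicity theorem for products of time-varying stochastic matrices, with the whole difficulty concentrated in one combinatorial ``uniform mixing'' estimate. Put $L=d_g b$. The crux (call it Step 1) is the claim: for every $k\ge0$ the product $\bm{\Phi}_\sR^{k:k+L}$ has a column whose entries are all at least $\kappa=(1/\tilde n)^{L}$, and, symmetrically, $\bm{\Phi}_\sC^{k:k+L}$ has such a row. This rests on two facts. (i) Every nonzero entry of ${\bm{H}}_\sR^k$ and of ${\bm{H}}_\sC^k$ is at least $1/\tilde n$, since the only denominators in their construction are $|\mathcal{Z}_i^k|$ and $|\mathcal{N}_{out}^i|$, both bounded by the number $\tilde n$ of augmented nodes. (ii) Within any window of $b$ consecutive instants, Assumption \ref{assum1}(b) makes every real node complete an update received by all its out-neighbors, while the length-$b$ virtual chains $z_{i,0}\to\cdots\to z_{i,b}$ (resp. $y_{i,b}\to\cdots\to y_{i,0}$) route every delayed message, so the influence digraph of a $b$-window product carries a self-loop at each augmented node together with every edge of $\cG$; composing $d_g$ such windows and invoking strong connectivity with $\mathrm{diam}(\cG)=d_g$ (Assumption \ref{assum1}(a)) produces a fully positive column of $\bm{\Phi}_\sR^{k:k+L}$ (resp. row of $\bm{\Phi}_\sC^{k:k+L}$), necessarily $\ge\kappa$ since it is a product of at most $L$ matrices with positive entries $\ge 1/\tilde n$. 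Carefully matching the buffer/broadcast bookkeeping of Algorithm \ref{alg:proposed} to the push/pull augmented edges in order to justify (ii) is the step I expect to be the main obstacle; the remaining arguments are routine.

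Given Step 1, part (a) is the Dobrushin--Hajnal argument. For a row-stochastic $P$ set $\tau(P)=1-\min_{i,i'}\sum_j\min([P]_{ij},[P]_{i'j})\in[0,1]$, which also equals $\tfrac12\max_{i,i'}\Vert[P]_{i,:}-[P]_{i',:}\Vert_1$. A uniformly positive column of height $\ge\kappa$ gives $\tau(\bm{\Phi}_\sR^{k:k+L})\le 1-\kappa$, and since $\tau$ is submultiplicative, chopping $[k,k+t)$ into $\lfloor t/L\rfloor$ length-$L$ windows and a shorter remainder gives $\tau(\bm{\Phi}_\sR^{k:k+t})\le(1-\kappa)^{\lfloor t/L\rfloor}$. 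Taking $\bm{\phi}_\sR^{k:k+t}$ to be, say, a fixed row of $\bm{\Phi}_\sR^{k:k+t}$, all rows of $\bm{\Phi}_\sR^{k:k+t}$ lie within $\ell_1$-distance $2\tau$ of it, and using $\kappa<\tfrac12$ to absorb the fractional-window loss (the $(1-\kappa)^{-1}$ factor) yields $\Vert\bm{\Phi}_\sR^{k:k+t}-\bone_{\tilde n}(\bm{\phi}_\sR^{k:k+t})^\T\Vert_\sF\le 2\delta^t$ with $\delta=(1-\kappa)^{1/L}$; the column-stochastic statement is its transpose. This reproduces \cite[Lemma 2]{zhang2019asynchronous}.

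For part (b), observe $\sum_{j=1}^n[\bm{\Phi}_\sC^{0:t}]_{ij}=\big[\bm{\Phi}_\sC^{0:t}[\bone_n;\bzero_{\tilde n-n}]\big]_i$, the $i$-th coordinate of the vector obtained by propagating through the push dynamics the initial profile that puts unit mass on each real node and zero on the $bn$ virtual ones --- exactly the support pattern of ${\bm{Y}}^0$; this vector stays nonnegative with total mass $n$ because each ${\bm{H}}_\sC^k$ is column-stochastic. For $t\ge L$, the diameter argument of Step 1 on the ${\bm{H}}_\sC$-side makes every real-row/real-column entry of $\bm{\Phi}_\sC^{0:t}$ at least $\kappa$, so summing over the $n$ real columns already gives $\ge n\kappa$. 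For $t<L$ a short direct check closes the boundary: at $t=0$ the quantity equals $1\ge n\kappa$, and with at most $L-1$ factors acting on a profile supported on the real nodes no real coordinate can drop below $n\kappa$ (here $\tilde n\ge n$ is what makes the elementary estimate go through). Combining Steps 1--3 proves the lemma.
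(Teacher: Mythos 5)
You are re-deriving a result that the paper itself does not prove but imports verbatim from \cite[Lemma 2]{zhang2019asynchronous}, and your overall strategy (uniform entrywise positivity of window products, then a Hajnal--Dobrushin contraction argument, plus mass propagation for part (b)) is indeed the standard route to such statements. However, your pivotal Step 1 has two concrete flaws. First, the claim that ``the influence digraph of a $b$-window product carries a self-loop at each augmented node'' is false: the virtual nodes have no self-loops --- the row of $\bm{H}_\sR^k$ indexed by $z_{i,u}$, $u\ge 1$, puts its full weight on the predecessor $z_{i,u-1}$ in the delay chain --- so after $b$ steps a virtual node's weight on its own past value is generically zero. (Only the real nodes have uniformly positive diagonals, via the non-updating case and the self-reception in the buffer.) The positive-column conclusion has to be routed through the fact that $z_{i,u}$ is a delayed copy of real node $i$, which costs additional steps beyond your window. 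Second, you conflate \emph{reception} with \emph{use}: Assumption \ref{assum1}(b) guarantees that within $[t^k,t^{k+b})$ node $i$'s update reaches each out-neighbor's buffer, but the out-neighbor only folds it into its state at its own next activation, which may fall in the following window; so a single $b$-window product need not contain every edge of $\cG$ in its influence graph, and influence along one edge may need on the order of $2b$ counter increments. As a consequence your argument does not produce a column of $\bm{\Phi}_\sR^{k:k+d_g b}$ bounded below by $\kappa=(1/\tilde n)^{d_g b}$; it yields the qualitative statement only for a longer window and a smaller floor, i.e., it does not establish the lemma with the stated constants $\kappa$ and $\delta$.

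Two further points need repair. In part (a), Dobrushin's coefficient controls row-wise $\ell_1$ deviations by $2\tau$; passing to the Frobenius norm over $\tilde n$ rows introduces a factor of order $\sqrt{\tilde n}$ on top of the $(1-\kappa)^{-1}$ from the fractional window, so the clean bound $2\delta^t$ does not follow as written (your ``$\kappa<\tfrac12$ absorbs the loss'' handles only the fractional window). In part (b), the assertion that every real-row/real-column entry of $\bm{\Phi}_\sC^{0:t}$ stays at least $\kappa$ for \emph{all} $t\ge L$ is not argued; the clean fix is to window from the end and use conservation of mass under column-stochastic products, writing $\vec v^{t}=\bm{\Phi}_\sC^{t-L:t}\vec v^{t-L}$ with $\bone_{\tilde n}^\T\vec v^{t-L}=n$, so that a uniformly positive real row of the \emph{last} window alone gives $v_i^t\ge \kappa n$. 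Finally, your premise that every nonzero entry of $\bm{H}_\sR^k$ is at least $1/\tilde n$ requires the (true but unstated) bound $|\mathcal{Z}_i^k|\le \tilde n$, which follows from bounded delays and one update per counter increment. None of this invalidates the general approach, but as written the central positivity step and the advertised constants are not proved.
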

Then, Corollary \ref{coro1} follows directly, by which we introduce the constant $\bt$ and $\mu$ to characterize the long-term 
convergence property of our method.
\begin{corollary}\label{coro1}
Under the conditions in Lemma \ref{lemma00}, define $\mu< \kappa/{2n}\in (0,1)$. There exists a constant $\bt\ge 0$ such that $\delta^{\bt}\le \mu/2$, and 
\bea\label{result-cor-14}
	&\Vert \bm{\Phi}_\sR^{k,k+\bt} - \bone_{\tilde n}(\bm{\phi}_\sR^{k,k+\bt})^\T \Vert_\sF \le \mu,\ 
	\Vert \bm{\Phi}_\sC^{k,k+\bt} - \bm{\phi}_\sC^{k,k+\bt}\bone_{\tilde n}^\T \Vert_\sF \le \mu, &\forall k\ge 0.
\ena
\end{corollary}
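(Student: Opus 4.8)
The plan is to derive Corollary~\ref{coro1} as an immediate quantitative specialization of Lemma~\ref{lemma00}(a): essentially all analytic content (the geometric decay of $\bm{\Phi}_\sR^{k:k+t}$ and $\bm{\Phi}_\sC^{k:k+t}$ toward their rank-one limits, at rate $\delta$) is already packaged there, and the only tasks are to pick a horizon $\bt$ large enough to beat the threshold $\mu$ and to check that the stated range for $\mu$ is admissible.

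First, recall from Lemma~\ref{lemma00}(a) that $\delta=(1-\kappa)^{1/d_g b}\in(0,1)$, so the scalar sequence $\{\delta^t\}_{t\ge 0}$ is strictly decreasing with $\delta^t\to 0$. Hence, given $\mu>0$, one may take $\bt$ to be the smallest nonnegative integer with $\delta^{\bt}\le \mu/2$, i.e.\ $\bt=\lceil \log(\mu/2)/\log\delta\rceil$ (and $\bt=0$ when $\mu\ge 2$). With this choice $2\delta^{\bt}\le\mu$. I would also note that the admissible set for $\mu$ is nonempty and contained in $(0,1)$: since $\kappa\in(0,1)$ by Lemma~\ref{lemma00}(a) and $n\ge 1$, we have $0<\kappa/(2n)<1/2$, so any $\mu\in(0,\kappa/(2n))$ automatically satisfies $\mu\in(0,1)$, and such $\mu$ also gives $\kappa^{-1}\mu n<1/2$, which is the inequality invoked in Theorem~\ref{main_theo}.

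Finally, apply Lemma~\ref{lemma00}(a) with $t=\bt$, keeping the same stochastic vectors $\bm{\phi}_\sR^{k,k+\bt},\bm{\phi}_\sC^{k,k+\bt}$ that it supplies: for every $k\ge 0$,
$$\Vert \bm{\Phi}_\sR^{k,k+\bt}-\bone_{\tilde n}(\bm{\phi}_\sR^{k,k+\bt})^\T\Vert_\sF\le 2\delta^{\bt}\le \mu,\qquad \Vert \bm{\Phi}_\sC^{k,k+\bt}-\bm{\phi}_\sC^{k,k+\bt}\bone_{\tilde n}^\T\Vert_\sF\le 2\delta^{\bt}\le \mu.$$
Because the bound $2\delta^t$ in Lemma~\ref{lemma00}(a) is uniform in $k$, a single $\bt$ serves all $k\ge 0$ simultaneously, which is exactly the assertion of Corollary~\ref{coro1}. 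There is no substantive obstacle here — the corollary is a bookkeeping step that fixes a convenient contraction horizon; the only line requiring genuine care is verifying that $(0,\kappa/(2n))$ is a nonempty subinterval of $(0,1)$ so that a valid $\mu$ (and hence $\bt$) exists.
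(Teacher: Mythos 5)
Your argument is correct and matches the paper's treatment: the paper states that Corollary~\ref{coro1} "follows directly" from Lemma~\ref{lemma00}, and your proof supplies exactly that bookkeeping step — choosing $\bt$ with $2\delta^{\bt}\le\mu$ (possible since $\delta\in(0,1)$) and invoking the uniform-in-$k$ bound of Lemma~\ref{lemma00}(a), plus the easy check that $\mu<\kappa/(2n)$ gives $\kappa^{-1}\mu n<1/2$. Nothing further is needed.
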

We define $\vec z=[\bm{\theta}; \bm{\omega}/\sqrt{\zeta}]$, and
	$$\begin{aligned}
	\bm{G}_{i,p}&=\frac{1}{{m}}\begin{bmatrix} \rho \bm{I} & -\sqrt{\zeta} \widehat{\bm{A}}_{i,p}^\T\\\sqrt{\zeta} \widehat{\bm{A}}_{i,p} & \zeta \widehat{\bm{C}}_{i,p}  \end{bmatrix},
	\\ \nabla j_{i,p}(\vec z)&=\frac{1}{{m}}[ \nabla_{\bm{\theta}} J_{i,p}(\bm{\theta},\bm{\omega}); -\sqrt{\zeta}\nabla_{\bm{\omega}} J_{i,p}(\bm{\theta},\bm{\omega})]
	=\frac{1}{{m}}[\widehat{\bm{A}}_{i,p}^\T\bm{\omega} + \rho \bm{\theta}; \sqrt{\zeta}(\widehat{\bm{A}}_{i,p}\bm{\theta}- \widehat{\bm{C}}_{i,p}\bm{\omega}-\widehat{\bm{b}}_{i,p})],
	\\ \bm{G}_i&=\sum_{p=1}^{m_i} \bm{G}_{i,p},\ \nabla j_i(\vec z)=\sum_{p=1}^{m_i} \nabla j_{i,p}(\vec z),\quad \ 
	\bm{G}=\sum_{i=1}^n \bm{G}_i,\ \nabla j(\vec z)=\sum_{i=1}^{n} \nabla j(\vec z).
	\end{aligned}$$
	We use the following lemma to study the properties of $j(\vec{z})$ and its corresponding gradients and SG gradients.
\begin{lemma}\label{lemma0-1}
	 If $\zeta = \eta_1/\eta_2,\eta_1=\eta$ satisfies that $\zeta>\frac{4\eta + 4\lambda_{\text{max}}(\widehat{\bm{A}}^\T\widehat{\bm{C}}^{-1}\widehat{\bm{A}})}{\lambda_{\text{min}}(\widehat{\bm{C}}^{-1})}$, and $\eta<\lambda_{\text{max}}(\bm{G})$,
	then there exists $\alpha = \lambda_{\text{min}}(\bm{G})> 0 $, such that for all $\vec z\in\bR^{2d}$,
	\bea\label{lem-alpha}
	\Vert \vec z-\eta \nabla j(\vec z)-\vec z^\star\Vert_2\leq (1-\alpha \eta)\Vert\vec z-\vec z^\star\Vert_2.
	\ena
	and for all $\vec z_1,\vec z_2 \in\bR^{2d}, \forall i, p,$
	\bea\label{lem-beta}
	\Vert\nabla j_{i,p} (\vec z_1) - \nabla j_{i,p} (\vec z_2)\Vert_2 &\le \beta\Vert \vec z_1 - \vec z_2 \Vert_2,\\
	\ena
	where $\beta=\max_{1\le i\le n, 1\le p \le m_i}\lambda_{\text{max}}(\bm{G}_{i,p})$.
	Define $\psi=\lambda_{\text{max}}(\widehat{\bm{C}})$, then
	\bee\label{bound-psi}
	\beta> \frac{\zeta\psi}{2m}.
	\ene
\end{lemma}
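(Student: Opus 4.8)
Everything hinges on the fact that each $\nabla j_{i,p}$ is \emph{affine} in $\vec z$. Unwinding the definition $\vec z=[\bm\theta;\bm\omega/\sqrt\zeta]$ inside the expression for $\nabla j_{i,p}(\vec z)$ one reads off $\nabla j_{i,p}(\vec z)=\bm G_{i,p}\vec z-\vec c_{i,p}$, where $\vec c_{i,p}$ is the constant vector carrying $\widehat{\bm b}_{i,p}$. Summing over $p\in\cM_i$ and $i\in\cV$ and using $\sum_{i,p}\widehat{\bm A}_{i,p}=m\widehat{\bm A}$ and $\sum_{i,p}\widehat{\bm C}_{i,p}=m\widehat{\bm C}$ gives $\nabla j(\vec z)=\bm G\vec z-\vec c$ with $\bm G=\begin{bmatrix}\rho\bm I&-\sqrt\zeta\widehat{\bm A}^\T\\ \sqrt\zeta\widehat{\bm A}&\zeta\widehat{\bm C}\end{bmatrix}$. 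By Assumption~\ref{assum2}(a) the saddle point $\vec z^\star$ of \eqref{equ:2-3:primal-dual-problem} exists, is unique, and satisfies $\nabla j(\vec z^\star)=\bm{0}$, i.e. $\bm G\vec z^\star=\vec c$; hence $\vec z-\eta\nabla j(\vec z)-\vec z^\star=(\bm I-\eta\bm G)(\vec z-\vec z^\star)$, and \eqref{lem-alpha} is precisely the statement that $\bm I-\eta\bm G$ is a contraction of modulus $1-\alpha\eta$ on $\bR^{2d}$.

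To prove that, the plan is to write, for $\vec e=\vec z-\vec z^\star=[\vec e_1;\vec e_2]$,
$$\|(\bm I-\eta\bm G)\vec e\|_2^2=\|\vec e\|_2^2-\eta\,\vec e^\T(\bm G+\bm G^\T)\vec e+\eta^2\|\bm G\vec e\|_2^2.$$
Since $\widehat{\bm C}$ is symmetric, the off-diagonal blocks of $\bm G$ cancel in $\bm G+\bm G^\T=\mathrm{diag}(2\rho\bm I,2\zeta\widehat{\bm C})$, which is positive definite by Assumption~\ref{assum2}(a) and $\rho>0$; so $\vec e^\T(\bm G+\bm G^\T)\vec e\ge 2\alpha\|\vec e\|_2^2$ with $\alpha=\min\{\rho,\zeta\lambda_{\text{min}}(\widehat{\bm C})\}=:\lambda_{\text{min}}(\bm G)>0$ (the smallest eigenvalue of its symmetric part). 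Expanding $\|\bm G\vec e\|_2^2$ block-wise, the only term growing to second order in $\zeta$ is $\eta^2\zeta^2\,\vec e_2^\T\widehat{\bm C}^2\vec e_2$; the idea is to dominate it, together with the mixed $\sqrt\zeta\widehat{\bm A}$-cross terms, by the first-order dissipation $2\eta\zeta\,\vec e_2^\T\widehat{\bm C}\vec e_2$ and $2\eta\rho\|\vec e_1\|_2^2$ via a completion of squares, which is exactly where the hypothesis $\zeta>\big(4\eta+4\lambda_{\text{max}}(\widehat{\bm A}^\T\widehat{\bm C}^{-1}\widehat{\bm A})\big)/\lambda_{\text{min}}(\widehat{\bm C}^{-1})$ enters, together with the smallness of $\eta$. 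I expect this second-order bookkeeping — and in particular recovering the clean modulus $1-\alpha\eta$, which plausibly requires carrying out the estimate in a $\zeta$-weighted norm rather than the plain $\ell_2$ norm — to be the main obstacle; the rest of the argument is routine.

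The remaining two claims are immediate from affinity. For \eqref{lem-beta}, $\nabla j_{i,p}(\vec z_1)-\nabla j_{i,p}(\vec z_2)=\bm G_{i,p}(\vec z_1-\vec z_2)$, hence $\|\nabla j_{i,p}(\vec z_1)-\nabla j_{i,p}(\vec z_2)\|_2\le\|\bm G_{i,p}\|_2\|\vec z_1-\vec z_2\|_2$, and one takes $\beta=\max_{i,p}\|\bm G_{i,p}\|_2$ (the quantity written $\max_{i,p}\lambda_{\text{max}}(\bm G_{i,p})$ in the statement). For \eqref{bound-psi}, the $(2,2)$-block of $\bm G_{i,p}$ is $\tfrac\zeta m\widehat{\bm C}_{i,p}=\tfrac\zeta m\bm\phi_{i,p}\bm\phi_{i,p}^\T$, and since the spectral norm of a principal submatrix is at most that of the full matrix, $\beta\ge\tfrac\zeta m\max_{i,p}\|\bm\phi_{i,p}\|_2^2$. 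On the other hand $\psi=\lambda_{\text{max}}(\widehat{\bm C})=\lambda_{\text{max}}\!\big(\tfrac1m\sum_{i,p}\bm\phi_{i,p}\bm\phi_{i,p}^\T\big)\le\tfrac1m\sum_{i,p}\|\bm\phi_{i,p}\|_2^2\le\max_{i,p}\|\bm\phi_{i,p}\|_2^2$, an average of $m$ nonnegative numbers never exceeding their maximum. Combining the two displays gives $\beta\ge\tfrac\zeta m\psi>\tfrac{\zeta\psi}{2m}$, which is \eqref{bound-psi}.
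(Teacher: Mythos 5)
Your reductions to the affine identity $\vec z-\eta\nabla j(\vec z)-\vec z^\star=(\bm{I}-\eta\bm{G})(\vec z-\vec z^\star)$ and to $\nabla j_{i,p}(\vec z_1)-\nabla j_{i,p}(\vec z_2)=\bm{G}_{i,p}(\vec z_1-\vec z_2)$ match the paper, but your treatment of \eqref{lem-alpha} has a genuine gap: the contraction estimate is never actually established. You expand $\Vert(\bm{I}-\eta\bm{G})\vec e\Vert_2^2$, lower-bound the dissipation by the symmetric part $\mathrm{diag}(2\rho\bm{I},2\zeta\widehat{\bm{C}})$, and then defer the control of the second-order term $\eta^2\Vert\bm{G}\vec e\Vert_2^2$ (and the recovery of the modulus $1-\alpha\eta$) to an unperformed ``completion of squares,'' which you yourself flag as the main obstacle. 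Moreover, your identification $\alpha=\min\{\rho,\zeta\lambda_{\text{min}}(\widehat{\bm{C}})\}=\lambda_{\text{min}}(\bm{G})$ is unjustified: $\bm{G}$ is not symmetric, so the eigenvalues of its symmetric part are not its eigenvalues, and the lemma's $\alpha$ is defined through the spectrum of $\bm{G}$ itself. The paper takes a different and shorter route precisely here: it invokes \cite{du2017stochastic} to conclude that under the stated condition on $\zeta$ the matrix $\bm{G}$ is diagonalizable with real positive eigenvalues, and then bounds $\varrho(\bm{I}-\eta\bm{G})\le 1-\eta\lambda_{\text{min}}(\bm{G})$; the hypothesis on $\zeta$ enters through this spectral fact, not through a norm-expansion bookkeeping. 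Without either that spectral argument or a completed weighted-norm estimate, \eqref{lem-alpha} remains unproved in your write-up. A smaller but related slip is your parenthetical equating $\Vert\bm{G}_{i,p}\Vert_2$ with $\lambda_{\text{max}}(\bm{G}_{i,p})$ in \eqref{lem-beta}: for the non-symmetric $\bm{G}_{i,p}$ the spectral norm and the largest eigenvalue are distinct quantities, so you are silently redefining $\beta$.

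Your argument for \eqref{bound-psi}, on the other hand, is correct and genuinely different from the paper's. The paper tests the quadratic form of $\bm{G}$ on $[\vec x;\vec x]$ with $\vec x$ the top eigenvector of $\widehat{\bm{C}}$ (the cross terms in $\pm\sqrt{\zeta}\widehat{\bm{A}}$ cancel) and then passes through $\lambda_{\text{max}}(\bm{G})$; you instead bound $\beta$ below by the norm of the $(2,2)$ block $\frac{\zeta}{m}\widehat{\bm{C}}_{i,p}$ of a single $\bm{G}_{i,p}$ and bound $\psi=\lambda_{\text{max}}(\widehat{\bm{C}})$ above by the maximum of $\lambda_{\text{max}}(\widehat{\bm{C}}_{i,p})$ via convexity of the average. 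This is more elementary, avoids any appeal to diagonalizability, and even yields the stronger inequality $\beta\ge\zeta\psi/m$; it only presumes the per-sample estimates $\widehat{\bm{C}}_{i,p}$ are positive semidefinite (e.g.\ the rank-one $\bm{\phi}_{i,p}\bm{\phi}_{i,p}^\T$), which is consistent with the paper's construction.
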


\begin{proof}
	According to \cite[Appendix A]{du2017stochastic}, if $\zeta>\frac{4\eta + 4\lambda_{\text{max}}(\widehat{\bm{A}}^\T\widehat{\bm{C}}^{-1}\widehat{\bm{A}})}{\lambda_{\text{min}}(\widehat{\bm{C}}^{-1})}$,
	$\bm{G}$ is a diagonalizable matrix with all eigenvalues of $\bm{G}$ are positive and real.
	By direct computation, we can verify that
	$$
	\vec z-\eta\nabla j(\vec z)-\vec z^\star = (\bm{I}-\eta \bm{G}) (\vec z-\vec z^\star),
	$$
	where
	$\varrho(\bm{I}-\eta\bm{G}) \le 1-\eta\lambda_{\text{min}}(\bm{G}).$
	Define $\alpha = \lambda_{\text{min}}(\bm{G})$ and \eqref{lem-alpha} follows.
	\par By computation, for all $\vec z_1,\vec z_2 \in\bR^{2d},$
	$
	\nabla j_i (\vec z_1) - \nabla j_i (\vec z_2) = \bm{G}_i (\vec z_1 - \vec z_2),
	\nabla j_{i,p} (\vec z_1) - \nabla j_{i,p} (\vec z_2) = \bm{G}_{i,p} (\vec z_1 - \vec z_2),
	$
	and \eqref{lem-beta} follows. 
	
	Let $\vx\in \bR^d$ be an arbitrary eigenvector of $\widehat{\bm{C}}$ that $\widehat{\bm{C}}\vx=\lambda_{\sC}\vx$. Note that 
	$$\bm{G}=\begin{bmatrix} \rho \bm{I} & -\sqrt{\zeta} \widehat{\bm{A}}^\T \\ \sqrt{\zeta} \widehat{\bm{A}} & \zeta \widehat{\bm{C}}  \end{bmatrix},$$
	then $[\vx;\vx]^\T\bm{G}[\vx;\vx]=\frac{1}{{m}}(\rho +\zeta\lambda_{\sC})\vx^\T\vx$. By diagonalizability, $[\vx;\vx]^\T\bm{G}[\vx;\vx]\le 2\lambda_{\text{max}}(\bm{G})\vx^\T\vx$, and then 
	$\beta\ge\frac{1}{m}\lambda_{\text{max}}(\bm{G})>\frac{1}{2{m}}\zeta\lambda_{\sC}$ holds for all the eigenvalues of $\widehat{\bm{C}}$.
	Thus, \eqref{bound-psi} is in force.
\end{proof}

\subsection{Proof Sketch of Theorem \ref{main_theo}}\label{brief-proof}

We define the following quantities.
\begin{enumerate}[(a)]
	\item{
		$\|\check{\bm{Z}}^k\|_\sF=\|\bm{T}_\sU^k {\bm{Z}}^k\|_\sF$, where $\bm{T}_\sU^k=\bm{I}_{\tilde n}-\bone_{\tilde n}(\vu^{k-1})^\T$,
		which is the weighted consensus error of $ {\bm{Z}}^k$ in the augmented network.	
	}
	\item{
		$\Vert \check{\bm{Y}}_\sV^k \Vert_\sF=\|\bm{T}_\sC^k {\bm{Y}}_\sV^k\|_\sF$, where $\bm{T}_\sC^k=\bm{I}_{\sV}^k-\frac1n \bone_\sV^k (\vec v^k)^\T$, which is an error estimate corresponding to the gradient surrogates.
	}
	\item{
		$\|\check{\vec z}_\sU^k\|_\sF$, where $\check{\vec z}_\sU^k = \vec z_\sU^k - \vec z^*
		=( {\bm{Z}}^k)^\T\vu^k - \vec z^*$, which is the optimality gap between the weighted average and the saddle point.
	}
\end{enumerate}

Then, $\check{\bm{Z}}^k$, $\check{\bm{Y}}_\sV^k$, $\check{\vec z}_\sU^k$ and $ {\bm{Y}}^k$
are bounded for all $k\ge 0$, i.e., 
	\begin{align}\label{ineq1}
		&\Vert\check{\bm{Z}}^{k+\bt}\Vert_\sF 
		\leq 2\mu \Vert\check{\bm{Z}}^k\Vert_\sF  +\eta \sqrt{2\tilde n}\sum_{t=0}^{\tilde{t}-1}\Vert {\bm{Y}}^{k+t} \Vert_\sF,\\
	\label{ineq2}
	&\Vert\check{\bm{Y}}_\sV^{k+\tilde{t}} \Vert_\sF \le
	2\kappa^{-1}\mu {n} \Vert \check{\bm{Y}}_\sV^k \Vert_\sF+ 2\beta  \kappa^{-1}\sqrt{\tilde n} bK\sum_{t=-bK}^{\tilde{t}-1}(2 \Vert\check{\bm{Z}}^{k+t}\Vert_\sF+ \eta \Vert {\bm{Y}}^{k+t}\Vert_\sF),\\
	\label{ineq3}
	&\Vert\check{\vec z}_\sU^{k+b} \Vert_\sF 
	\leq (1- \eta \alpha bn)\Vert\check{\vec z}_\sU^k\Vert_\sF+{\eta n} \sum_{t=0}^{ b-1}\Vert\check{\bm{Y}}_\sV^{k+t}\Vert_\sF+ 3\eta n\beta\sum_{t=0}^{b-1}\Vert\check{\bm{Z}}^{k+t}\Vert_\sF \\
	\notag
	&\quad+ \eta n bK \beta \sum_{t=-bK}^{b-1}\left(2\sqrt{n}\Vert\check{\bm{Z}}^{k+t}\Vert_\sF + \eta\Vert {\bm{Y}}^{k+t}\Vert_\sF\right),\\
	\label{ineq4}
	&\Vert {\bm{Y}}^k\Vert_\sF \le n \Vert \check{\bm{Y}}_{\sV}^k \Vert_\sF + 
	2\beta\sqrt{\tilde{n}n} \sum_{t=k-bK}^{k}\left\Vert \check{\bm{Z}}^t\right\Vert_\sF+ \beta\sqrt{\tilde{n}n} \sum_{t=k-bK}^{k}\left\Vert \vec z_\sU^t-\vec z^*\right\Vert_\sF,
	\end{align}
where $\tilde t$ and $\mu$ are given in Corollary \ref{coro1}, $\kappa$ is as introduced in Lemma \ref{lemma00},
$\alpha, \beta$ are in Lemma \ref{lemma0-1}, and $b,K$ from Assumption \ref{assum1}(b), \ref{assum2}(b). Any value with $k<0$ is regarded as $0$. The above inequalities will be formally proved in Appendix \ref{Appendix-ineq}.

\par To prove the convergence of the inequality linear system \eqref{ineq1}-\eqref{ineq4}, we introduce $\lambda$-sequence from \cite{nedic2017achieving}. 
Then, the proof of Lemma \ref{lemma-17} directly follows the proof of \cite[Lemma 8]{zhang2019asynchronous}.

\begin{definition}\label{lambda-sequence}
Let $\{\bm{p}^t\}$ be a nonnegative sequence with $\lambda \in (0,1)$, the $\lambda$-sequence of $\bm{p}^k$ be
$
\bm{p}^{\lambda,k}=\sup_{1\le t\le k} \frac{\bm{p}^t}{\lambda^t}.
$
If $\bm{p}^{\lambda,k}$ is bounded by $\bar{p}$ for all $k$, then $\bm{p}^k\le \bar{p}\lambda^k$.
\end{definition}

\begin{lemma}\label{lemma-17}
	Let $\{\vec p^k\},\{\vec q^k\}$ be nonnegative sequences satisfying
	$
	\vp^{t+j+l}\leq r\vp^{t+l}+\sum_{i=0}^{j+l-1}\vec q^{t+i},
	$
	where $r\in[0,1)$ is a scalar. If we choose $\lambda$ such that $\lambda^j\in(r,1)$, then the $\lambda$-sequences $\vp^{\lambda,k}$ and $\vec q^{\lambda,k}$ satisfy
	$
	\vp^{\lambda,k}\leq\frac{j+l}{\lambda^j-r}\vec q^{\lambda,k}+c_{\lambda},\ \forall k\in\bN,
	$
	where $c_{\lambda}=\frac{\lambda^j}{\lambda^j-r}\sum_{t=1}^{m}\lambda^{-t}\vp^t$ is a constant not related to $k$.
\end{lemma}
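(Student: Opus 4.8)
\textbf{Proof proposal for Lemma \ref{lemma-17}.}

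The plan is to unroll the recursion $\vp^{t+j+l}\le r\vp^{t+l}+\sum_{i=0}^{j+l-1}\vq^{t+i}$ through the $\lambda$-sequence machinery of Definition \ref{lambda-sequence}. First I would fix $k\in\bN$ and an index $s$ with $1\le s\le k$ that attains (or nearly attains) the supremum defining $\vp^{\lambda,k}$; the goal is to bound $\vp^s/\lambda^s$. If $s\le j+l$ the term is absorbed into the constant $c_\lambda$, so the interesting case is $s>j+l$, where I write $s=t+j+l$ for some $t\ge 1$ and apply the hypothesis to get $\vp^s\le r\vp^{s-j}+\sum_{i=0}^{j+l-1}\vq^{s-j-l+i}$. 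Dividing by $\lambda^s$ and using $\lambda^{s-j}=\lambda^s\lambda^{-j}$ gives $\vp^s/\lambda^s\le r\lambda^{-j}\,\vp^{s-j}/\lambda^{s-j}+\lambda^{-j-l}\sum_{i=0}^{j+l-1}\lambda^i\,\vq^{s-j-l+i}/\lambda^{s-j-l+i}$.

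Next I would bound each ratio on the right by the corresponding $\lambda$-sequence at index $k$: $\vp^{s-j}/\lambda^{s-j}\le \vp^{\lambda,k}$ and $\vq^{s-j-l+i}/\lambda^{s-j-l+i}\le \vq^{\lambda,k}$. Since $0\le \lambda<1$, the geometric-type sum $\sum_{i=0}^{j+l-1}\lambda^{i}\le \sum_{i=0}^{j+l-1}1 = j+l$ (or one keeps the tighter $\frac{1-\lambda^{j+l}}{1-\lambda}$, but the crude bound $j+l$ is what the statement uses after accounting for the $\lambda^{-j-l}$ prefactor — here one should be slightly careful about which normalization of the constants the paper intends, and match it). This yields $\vp^s/\lambda^s\le r\lambda^{-j}\vp^{\lambda,k}+(j+l)\vq^{\lambda,k}$ up to the prefactor bookkeeping. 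Taking the supremum over all valid $s$, and throwing the small-index terms $s\le j+l$ into $c_\lambda:=\frac{\lambda^j}{\lambda^j-r}\sum_{t=1}^{m}\lambda^{-t}\vp^t$, gives $\vp^{\lambda,k}\le r\lambda^{-j}\vp^{\lambda,k}+(j+l)\vq^{\lambda,k}+(\text{const})$.

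Finally, because we chose $\lambda$ so that $\lambda^j\in(r,1)$, we have $r\lambda^{-j}<1$, so the term $r\lambda^{-j}\vp^{\lambda,k}$ can be moved to the left-hand side: $(1-r\lambda^{-j})\vp^{\lambda,k}\le (j+l)\vq^{\lambda,k}+c_\lambda$, i.e. $\vp^{\lambda,k}\le \frac{\lambda^j}{\lambda^j-r}(j+l)\vq^{\lambda,k}+c_\lambda$, which is the claimed bound $\vp^{\lambda,k}\le\frac{j+l}{\lambda^j-r}\vq^{\lambda,k}+c_\lambda$ after absorbing the harmless $\lambda^j<1$ factor into the coefficient. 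A subtle point to verify is that the supremum $\vp^{\lambda,k}$ appearing on the right is the one at the \emph{same} index $k$ (not a larger index), so that the absorption step is legitimate; this holds because $s-j<s\le k$, so every ratio invoked is among the first $k$ terms. The main obstacle is purely bookkeeping: keeping the shifted indices $t+l$, $t+j+l$, and the offsets $s-j-l+i$ aligned, and making sure the small-index residual is exactly the stated $c_\lambda$; as noted, the result and proof here are essentially transcribed from \cite[Lemma 8]{zhang2019asynchronous}, so I would mirror that argument rather than reinvent the index manipulation.
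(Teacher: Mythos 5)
The paper offers no self-contained proof of Lemma \ref{lemma-17}: it defers entirely to \cite[Lemma 8]{zhang2019asynchronous}, and your unrolling is exactly the intended argument there — split the small indices $s\le j+l$ into the constant, apply the recursion at $s=t+j+l$, divide by $\lambda^s$, bound each ratio by the $\lambda$-sequence at the \emph{same} index $k$ (legitimate, as you note, since every index used is $\le k$), and solve for $\vp^{\lambda,k}$ using $r\lambda^{-j}<1$. So in structure you match the paper.

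The one concrete slip is the "prefactor bookkeeping" you wave past: each term carries a factor $\lambda^{i-j-l}\ge\lambda^{-1}>1$, so it cannot be folded into the crude bound $j+l$. What your steps actually give is $\sum_{i=0}^{j+l-1}\lambda^{i-j-l}\le (j+l)\lambda^{-(j+l)}$, hence after moving $r\lambda^{-j}\vp^{\lambda,k}$ to the left a coefficient $(j+l)\lambda^{-l}/(\lambda^j-r)$ rather than the stated $(j+l)/(\lambda^j-r)$; indeed $\sum_{i=0}^{j+l-1}\lambda^{i-j-l}\ge j+l$, so the stated constant is not reachable by this route. This only inflates a constant — the downstream use in Appendix A.2 needs only that the coefficient is a fixed constant so that a sufficiently small $\eta$ gives $\Vert\bm{Q}\Vert<1$, and the lemma as printed is itself loose on constants (e.g.\ the unexplained upper limit $m$ in $c_\lambda$, which should be the number of initial indices absorbed) — but as written your final line does not follow from your own intermediate bound: either keep the $\lambda^{-l}$ factor in the conclusion or check and match the exact normalization used in \cite[Lemma 8]{zhang2019asynchronous}.
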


\par 
Based on Lemma \ref{lemma-17} and \eqref{ineq1}-\eqref{ineq4}, we can derive the following relationship between the $\lambda$-sequence corresponding to the above quantities. For all $k \ge 1$, if 
$\lambda = \max\left\{\sqrt[\tilde{t}+1]{\frac12 + \kappa^{-1}\mu n}, \sqrt[b+1]{1 - \eta \alpha\kappa n/2} \right\}$,
the following inequalities hold,
\bea
\notag
&\Vert\check{\bm{Z}}\Vert_\sF^{\lambda,k} \le \frac{2\eta n^2\sqrt{2\tilde{n}}\tilde{t}}{\kappa^2(n-1)} \Vert {\bm{Y}} \Vert_\sF^{\lambda,k} + p_1,\\
&\Vert \check{\bm{Y}}_\sV \Vert_\sF^{\lambda,k} \le \frac{4\beta\sqrt{\tilde n}{\tilde t}bK}{\kappa^2(1-\kappa)}\left( 2\Vert\check{\bm{Z}}\Vert_\sF^{\lambda,k} +  \eta\Vert {\bm{Y}} \Vert_\sF^{\lambda,k}\right) + p_2,\\
&\Vert\check{\vec z}_\sU\Vert_\sF^{\lambda,k} \le 
\frac{1}{(1-\eta\alpha\kappa n)\alpha\kappa} \left((6nb\beta+8b^2K^2\beta\sqrt{n}) \Vert\check{\bm{Z}}\Vert_\sF^{\lambda,k}+2b\Vert \check{\bm{Y}}_\sV\Vert_\sF^{\lambda,k}\right.
\left. + 2\eta b^2K^2\beta \Vert {\bm{Y}}\Vert_\sF^{\lambda,k}\right) + p_3,\\
&\Vert {\bm{Y}}\Vert_\sF^{\lambda,k} \le 2\beta\sqrt{n\tilde{n}}bK\Vert\check{\bm{Z}}\Vert_\sF^{\lambda,k} + n\Vert\check{\bm{Y}}_{\sV}^k\Vert_\sF^{\lambda,k}
+\beta\sqrt{n\tilde{n}}bK\Vert\check{\vec{z}}_\sU^k\Vert_\sF^{\lambda,k},\\
\ena
where $p_1, p_2, p_3$ are constants, and $\Vert\check{\bm{Z}}\Vert_\sF^{\lambda,k}$
, $\Vert \check{\bm{Y}}_\sV \Vert_\sF^{\lambda,k} $
, $\Vert\check{\vec z}_\sU\Vert_\sF ^{\lambda,k}$
, $\Vert {\bm{Y}} \Vert_\sF^{\lambda,k}$ are $\lambda$-sequences of 
$\Vert\check{\bm{Z}}^k\Vert_\sF$, $\Vert \check{\bm{Y}}_\sV^k \Vert_\sF$
, $\Vert\check{\vec z}_\sU^k\Vert_\sF$, $\Vert {\bm{Y}}^k\Vert_\sF$, respectively. 

Let 
$\vec d^{\lambda,k}=[\Vert\check{\bm{Z}}\Vert_\sF^{\lambda,k}; 
\Vert \check{\bm{Y}}_\sV \Vert_\sF^{\lambda,k}; 
\Vert\check{\vec z}_\sU\Vert_\sF ^{\lambda,k}; 
\Vert {\bm{Y}} \Vert_\sF^{\lambda,k}
]
$, then
$\vec d^{\lambda,k}\preceq \bm{Q}\vec d^{\lambda,k}+\vec p$, and
$$\bm{Q}=\begin{bmatrix}
	0 & 0 & 0 & \frac{2\eta n^2\sqrt{2\tilde{n}}\tilde{t}}{\kappa^2(n-1)}\\
	\frac{8\beta\sqrt{\tilde n}{\tilde t}bK}{\kappa^2(1-\kappa)} & 0 & 0 & \frac{4\eta\beta\sqrt{\tilde n}{\tilde t}bK}{\kappa^2(1-\kappa)}\\
	\frac{6bn\beta+8b^2K^2\beta\sqrt{n}}{(1-\eta\alpha\kappa n)\alpha\kappa} & \frac{2b}{(1-\eta\alpha\kappa n)\alpha\kappa} & 0 & \frac{2\eta b^2K^2\beta}{(1-\eta\alpha\kappa n)\alpha\kappa}\\
	2\beta\sqrt{n\tilde{n}}bK &  n & \beta\sqrt{n\tilde{n}}bK & 0\\
\end{bmatrix}.$$ 
Clearly, $\Vert Q\Vert<1$ if $\eta$ is sufficiently small. 
Then, $\vec d^{\lambda,k}$ is bounded and $\Vert\check{\bm{Z}}^{k}\Vert_\sF, \Vert\check{\vec z}_\sU^{k} \Vert_\sF$ converges to $0$ at the rate of $\cO(\lambda)$. 
Moreover, 
$$\Vert{\vec z}_i^{k}-\vz^* \Vert_\sF\le 2 \Vert\check{\bm{Z}}^{k}\Vert_\sF+\Vert\check{\vec z}_\sU^{k} \Vert_\sF, \forall i\in \cV, \forall k,$$
which leads to the result in the Theorem \ref{main_theo}.

An upper bound of $\eta$ can be given by bounding $\Vert \bm{Q}^4\Vert_\infty$ to ensure that the spectral radius of ${\bm Q}$ is strictly less than $1$. The bound is as described in
Theorem \ref{main_theo}, i.e., 
$\eta \in \left(0,\frac{\alpha \kappa^4(1-\kappa)^2}{72\beta^3 n^3 b^6 K^3\tilde{t}^2}\right)$. 
Noticing \eqref{bound-psi} from Lemma \ref{lemma0-1}, we further obtain the bound for $\eta_2=\eta\zeta$ in Remark \ref{rem-zeta}.

\subsection{Proofs of \eqref{ineq1}-\eqref{ineq4}}\label{Appendix-ineq}
\subsubsection{Proof of \eqref{ineq1}}\label{important1}
	Recalling the definition of $\check{\bm{Z}}^{k}$ in Appendix \ref{brief-proof}, it follows from \eqref{syn-update-a}, 
	we have
	\bea\label{claim1-1}
	&\Vert \check{\bm{Z}}^{k+\tilde{t}} \Vert_\sF = \Vert  \bm{T}_\sU^{k+\tilde{t}} {\bm{Z}}^{k+\tilde{t}} \Vert_\sF\le \Vert   \bm{T}_\sU^{k+\tilde{t}} \bm{\phi}_{\sR}^{k:k+\tilde t}  {\bm{Z}}^{k} \Vert_\sF
	+ \eta \sum_{t=0}^{\tilde t-1} \Vert   \bm{T}_\sU^{k+\tilde{t}} \bm{\phi}_{\sR}^{k:k+\tilde t} \bm{I}_a^{k+t}  {\bm{Y}}^{k+t} \Vert_\sF.
	\ena
	\par For the first term in \eqref{claim1-1}, we use the property that $\vu^k$ is row-stochastic and obtain 
	\bea\label{claim1-2}
	&\bm{T}_\sU^{k+\tilde{t}} \bm{\phi}_{\sR}^{k:k+\tilde t}=(\bm{I}_{\tilde{n}}-\bone_{\tilde n}(\vu^{k+\tilde{t}-1})^\T) \bm{\phi}_{\sR}^{k:k+\tilde t}\\
	&=(\bm{I}_{\tilde{n}}-\bone_{\tilde n}(\vu^{k+\tilde{t}-1})^\T) (\bm{\phi}_{\sR}^{k:k+\tilde t} - \bone_{\tilde n}(\vu^{k-1})^\T)- (\bm{I}_{\tilde{n}}-\bone_{\tilde n}(\vu^{k+\tilde{t}-1})^\T) (\bone_{\tilde n}(\bm{\phi}_\sR^{k:k+\tilde t})^\T - \bone_{\tilde n}(\vu^{k-1})^\T)
	\\&=(\bm{I}_{\tilde{n}}-\bone_{\tilde n}(\vu^{k+\tilde{t}-1})^\T) (\bm{\phi}_{\sR}^{k:k+\tilde t}-\bone_{\tilde n}(\bm{\phi}_\sR^{k:k+\tilde t})^\T)(\bm{I}_{\tilde{n}}-\bone_{\tilde n}(\vu^{k-1})^\T)
	\\&=\bm{T}_\sU^{k+\tilde{t}}(\bm{\phi}_{\sR}^{k:k+\tilde t}-\bone_{\tilde n}(\bm{\phi}_\sR^{k:k+\tilde t})^\T)\bm{T}_\sU^k,
	\ena
	Since $\Vert\bm{A}\bm{B}\Vert_\sF\le \Vert\bm{A}\Vert_2\Vert\bm{B}\Vert_\sF$ and $\Vert\bm{A}\bm{B}\Vert_\sF\le \Vert\bm{A}\Vert_\sF \Vert\bm{B}\Vert_\sF$, we obtain
	\bea\label{claim1-3}
	&\Vert  \bm{T}_\sU^{k+\tilde t} \bm{\phi}_{\sR}^{k:k+\tilde t}  {\bm{Z}}^k \Vert_\sF
	\le \Vert  \bm{T}_\sU^{k+\tilde t}\Vert_2
	\Vert \bm{\phi}_{\sR}^{k:k+\tilde t}-\bone_{\tilde n}\bm{\phi}_\sR^{k:k+\tilde t} \Vert_\sF \Vert \bm{T}_\sU^k {\bm{Z}}^k \Vert_\sF
	\le 2\mu \Vert \check{\bm{Z}}^k \Vert_\sF,
	\ena
	where the last inequality follows from $\Vert  \bm{T}_\sU^{k+\tilde t}\Vert_2=\Vert  \bm{I}_{\tilde{n}}-\bone_{\tilde n}(\vu^{k+\tilde{t}-1})^\T\Vert_2<2$ and Corollary \ref{coro1}.
	\par For the second term in \eqref{claim1-1}, it holds that
	\bea\label{claim1-4}
	&\eta \sum_{t=0}^{\tilde t-1} \Vert  \bm{T}_\sU^{k+\tilde t} \bm{\phi}_{\sR}^{k:k+t} \bm{I}_a^{k+t}  {\bm{Y}}^{k+t} \Vert_\sF\le\eta \Vert  \bm{T}_\sU^{k+\tilde t}\Vert_2\sum_{t=0}^{\tilde t-1} \Vert\bm{\phi}_{\sR}^{k:k+t} \bm{I}_a^{k+t}  {\bm{Y}}^{k+t} \Vert_\sF\le \eta \sqrt{2\tilde n} \sum_{t=0}^{\tilde t-1}\Vert {\bm{Y}}^{k+t} \Vert_\sF,
	\ena
	where $\Vert \bm{\phi}_{\sR}^{k:k+t}\Vert_2\le \sqrt{\tilde n/2},\forall t\ge 0$. 
	The desired result then follows by combining \eqref{claim1-3} and \eqref{claim1-4}.

\subsubsection{Proof of \eqref{ineq2}}\label{important2}

\par To study $ {\bm{Y}}_\sV^k$ defined in \eqref{S_V}, let
\bea\label{claim2-last}
 {\bm{H}}_\sV^k=(\bm{V}^{k+1})^\dag {\bm{H}}_{\sC}^k\bm{V}^k.\\
\ena
Furthermore, we define 
$
\bm{\Phi}_{\sV}^{k:k+t}=\prod_{j=k}^{k+t-1} {\bm{H}}_\sV^j,
$
and demonstrate that $\bm{\Phi}_{\sV}^{k:k+t}$ converges to a rank-one matrix, i.e.,
\bee\label{time-large}
\left\Vert \bm{\Phi}_{\sV}^{k:k+\tilde t} - \frac1n \bone_\sV^k (\vec v^k)^\T \right\Vert_\sF < \kappa^{-1}\mu n<\frac12.
\ene
To show \eqref{time-large}, first,
$$\begin{aligned}
\bm{\Phi}_{\sV}^{k:k+t}&= {\bm{H}}_\sV^{k+t-1}\ldots  {\bm{H}}_\sV^k\\
&= (\bm{V}^{k+t})^\dag  {\bm{H}}_{\sC}^{k+t-1}\bm{I}_\sV^{k+t-1}\ldots  {\bm{H}}_{\sC}^k\bm{V}^k\\
&= (\bm{V}^{k+t})^\dag \bm{\Phi}_{\sC}^{k:k+t}\bm{V}^k.
\end{aligned}$$
The last equality is tenable because $\bm{I}_\sV^{k+1} {\bm{H}}_{\sC}^k\bm{V}^k= {\bm{H}}_{\sC}^k\bm{V}^k, \forall k\ge 0$, which together with $\displaystyle \vec v^{k+1}= {\bm{H}}_{\sC}^k\vec v^k$, can be verified by
computing each row on both sides.
\par By Lemma \ref{lemma00}(a), there exists $\Delta\bm{\Phi}_{\sC}^{k:k+t}\in \mathbb{R}^{\tilde n\times \tilde n}$ such that $\Vert \Delta\bm{\Phi}_{\sC}^{k:k+t} \Vert_\sF\le 2\delta^t$, and 
$
\bm{\phi}_{\sC}^{k:k+t}\bone_{\tilde n}^\T+\Delta\bm{\Phi}_{\sC}^{k:k+t}=\bm{\Phi}_{\sC}^{k:k+t}.
$
Right multiplying $\vec v^k$ on both sides yields
$
\bm{\phi}_{\sC}^{k:k+t}\bone_{\tilde n}^\T\vec v^k+\Delta\bm{\Phi}_{\sC}^{k:k+t} \vec v^k
=\bm{\Phi}_{\sC}^{k:k+t} \vec v^k=\vec v^{k+t}.
$
Since $\bm{\phi}_{\sC}^{k:k+t}\bone_{\tilde n}^\T \vec v^k=\bm{\phi}_{\sC}^{k:k+t}\bone_{\tilde n}^\T \vec v^0=n\bm{\phi}_{\sC}^{k:k+t}$, we have
$
\bm{\phi}_{\sC}^{k:k+t}=\frac1n (\vec v^{k+t} - \Delta\bm{\Phi}_{\sC}^{k:k+t} \vec v^k).
$
Then,
\bea\label{result1}
\left\Vert \bm{\Phi}_{\sV}^{k:k+t} - \frac{\bone_\sV^k(\vec v^k)^\T}{n}  \right\Vert_\sF&=\left\Vert \bm{\Phi}_{\sV}^{k:k+t} - \frac{(\bm{V}^{k+t})^\dag \vec v^{k+t}\bone_{\tilde n}^\T \bm{V}^k}{n} \right\Vert_\sF\\
&=\left\Vert (\bm{V}^{k+t})^\dag \Delta\bm{\Phi}_{\sC}^{k:k+t}\left(\bm{I}_{\tilde n} - \frac{\vec v^k \bone_{\tilde n}^\T}{n} \right)\right\Vert_\sF\\
&\le \Vert (\bm{V}^{k+t})^\dag\Vert_2 \Vert\Delta\bm{\Phi}_{\sC}^{k:k+t}\Vert_\sF \left\Vert\left(\bm{I}_{\tilde n} - \frac{\vec v^k \bone_{\tilde n}^\T}{n} \right)\right\Vert_\sF\\
&< \kappa^{-1}\cdot 2\delta^t \cdot n,
\ena
where we have used the fact that all the entries of $(\bm{V}^k)^\dag$ are less than $\kappa^{-1}$ from Lemma \ref{lemma00}(b).
\eqref{time-large} follows directly by substituting $\tilde t$ into \eqref{result1}.

\par Define $\bm{D}^k=\partial J^{k+1}- \partial J^k$ and rewrite the update \eqref{syn-update-b} as
\bee\label{syn-b-rewrite}
 {\bm{Y}}_\sV^{k+1} =  {\bm{H}}_\sV^k  {\bm{Y}}_\sV^k + (\bm{V}^{k+1})^\dag \bm{D}^k.
\ene
Then, $\beta$ is from Lemma \ref{lemma0-1}, and
$\bm{D}^k$ is bounded for all $k\ge 0$ as below
	\bea
	\label{eq-lemma10-1}
	\Vert\bm{D}^k\Vert_\sF
	&= \Vert \nabla j_{i,p_{k+1}}(\vec z^{k+1}_{i})- \nabla j_{i,p_{k+1}}(\vec z^{\tau_{i,p^{k+1}}^{k}}_{i}) \Vert_\sF\\
	&\le \beta \sum_{t=\tau_{i,p^k}^{k}}^{k} \Vert\vec z^{t+1}_{i}-\vec z^{t}_{i}\Vert\\
	&\le \beta \sum_{t=k-bK}^{k} \Vert\vec z^{t+1}_{i}-\vec z^{t}_{i}\Vert\\
	&\le \beta \sum_{t=k-bK}^{k} \Vert\bm{I}_a^k( {\bm{Z}}^{t+1}- {\bm{Z}}^t)\Vert_\sF\\
	&\le \beta\sum_{t=k-bK}^{k}\left(\Vert(\bm{I}_a^k {\bm{H}}_\sR^t-\bm{I}_a^k)\bm{T}_\sU^t {\bm{Z}}^t)\Vert_\sF + \eta \Vert {\bm{Y}}^t\Vert_\sF \right)\\
	&\le \beta\sum_{t=k-bK}^{k}\left(2\Vert\check{\bm{Z}}^t\Vert_\sF + \eta \Vert {\bm{Y}}^t\Vert_\sF \right)
	\ena
	where the second inequality follows Assumption 1(b), 2(b), and
	the last from the row-stochasticity of $ {\bm{H}}_\sR^k$.

Next, we prove \eqref{ineq2} in a similar way as in Section \ref{important1}.
By \eqref{syn-b-rewrite}, we get
\bea\label{claim2-20}
\Vert \check{\bm{Y}}_\sV^{k+\tilde{t}} \Vert_\sF &= \left\Vert \bm{T}_\sC^{k+\tilde{t}}  {\bm{Y}}_{\sV}^{k+\tilde{t}} \right\Vert_\sF\\
&\le \left\Vert  \bm{T}_\sC^{k+\tilde{t}} \bm{\Phi}_{\sV}^{k:k+\tilde t} {\bm{Y}}_{\sV}^k  \right\Vert_\sF
+ \sum_{t=0}^{\tilde t-1} \left\Vert \left(\bm{I}_{\sV}^{k+\tilde{t}}-\frac1n \bone_\sV^{k+\tilde{t}} (\vec v^{k+\tilde{t}})^\T \right)\right.\left.\bm{\Phi}_\sV^{k+t+1:k+\tilde t}(\bm{V}^{k+t+1})^\dag \bm{D}^{k+t}  \right\Vert_\sF\\
&\le 2\kappa^{-1}\mu n \Vert \check{\bm{Y}}_\sV^k \Vert_\sF + \sum_{t=0}^{\tilde t - 1} 2\sqrt{\tilde n}\kappa^{-1} \Vert \bm{D}^{k+t} \Vert_\sF,
\ena
where the last inequality is from \eqref{time-large}, and the fact that $\bm{T}_\sC^{k+\tilde{t}} \bm{\Phi}_\sV^{k+\tilde{t}}=\bm{T}_\sC^{k+\tilde{t}} \left[\bm{\Phi}_{\sV}^{k:k+\tilde t} - \frac1n \bone_\sV^{k+\tilde{t}} (\vec v^{k})^\T\right]\bm{T}_\sC^k$ 
can be verified in the same way as in \eqref{claim1-2}. 
\eqref{claim2-20} along with \eqref{eq-lemma10-1} implies the desired result \eqref{ineq2}. 

\subsubsection{Proof of \eqref{ineq3}}\label{important3}
Define a gradient matrix $\nabla^k$ in the same form as $\partial^k$, i.e.,
	\bea\label{eq-lemma10-0}
	\nabla^k&=[\nabla J^k; \bzero_{bn\times 2d}]\in\bR^{\tilde n\times 2d},\\
	\nabla J^k&=[\nabla J_1^k; \ldots; \nabla J_n^k]\in\bR^{n\times 2d},\\
	\nabla J_i^k &= \nabla j_i(\vec z^k_{i}), \ 1\le i \le n.
	\ena
By definition,
$(\check{\vec z}_\sU^k)^\T 
=(\vu^k)^\T {\bm{Z}}^k - (\vec z^*)^\T$. Repeating the first step in the above two proofs,
we get 
\bea\label{claim3-1}
(\vec z_\sU^{k+b})^\T&=(\vu^{k+b})^\T {\bm{Z}}^{k+b}\\
&=(\vu^{k+b})^\T \bm{\Phi}_\sR^{k:k+b}  {\bm{Z}}^{k+b}- \eta (\vu^{k+b})^\T 
\sum_{t=0}^{b-1} \bm{\Phi}_\sR^{k+t+1:k+b}\bm{I}_a^{k+t}\bm{V}^{k+t} {\bm{Y}}_\sV^{k+t}\\
&=(\vec z_\sU^k)^\T - \eta (\vu^{k+b})^\T 
\sum_{t=0}^{b-1} \bm{\Phi}_\sR^{k+t+1:k+b}\bm{I}_a^{k+t}\bm{V}^{k+t} {\bm{Y}}_\sV^{k+t}.\\
\ena
We  extract the gradient of $J(\cdot)$ from \eqref{claim3-1} by defining
$$\begin{aligned}
(\vec r^{k,t})^\T &= \eta (\vu^{k+b})^\T\bm{\Phi}_\sR^{k+t+1:k+b}\bm{I}_a^{k+t}\bm{V}^{k+t},\\
\eta^{k,t} &= (\vec r^{k,t})^\T\bone_\sV^{k+t},
\ \underline{\eta}^k = \sum_{t=0}^{b-1}\eta^{k,t}.
\end{aligned}$$
By $\nabla j(\vec z_\sU^k)$, the global gradient of $\vec z_\sU^k$, the last row in \eqref{claim3-1} is decomposed into
\bea\label{claim3-3}
& (\vec z_\sU^{k+b} - \vec z^*)^\T
=(\vec z_\sU^{k} - \vec z^* - \underline{\eta}^k\nabla j(\vec z_\sU^k))^\T\\
&\quad+\sum_{t=0}^{ b-1}\eta^{k,t}\left(\nabla j(\vec z_\sU^{k})^\T-\bone_{\tilde n}^\T\nabla^{k+t}\right)\\
&\quad-\sum_{t=0}^{b-1}(\vec r^{k,t})^\T\left( {\bm{Y}}_\sV^{k+t}-\bone_\sV^{k+t} \bone_{\tilde n}^\T \partial^{k+t}\right)\\
&\quad+ \sum_{t=0}^{b-1}\eta^{k,t}\left[\bone_{\tilde n}^\T (\nabla^{k+t}-\partial^{k+t})\right].\\
\ena
The first row of \eqref{claim3-3}, $\vec z_\sU^{k} - \vec z^* - \underline{\eta}^k\nabla j(\vec z_\sU^k)$, depicts the gradient ascent/descent of the weight average variables
$\vec z_\sU^k$, while the second to the fourth rows are bounded error terms, as proved in \eqref{claim3-5}-\eqref{claim3-7} below.
\par First, we have $\eta^{k,t}\le \eta n$ and $\underline{\eta}^k\le \eta bn$. Applying Lemma \ref{lemma0-1}, we have
$
\Vert \vec z_\sU^{k} - \vec z^* - \underline{\eta}^k\nabla j(\vec z_\sU^k)\Vert_\sF
\le (1- \eta\alpha bn)\left\Vert \vec z^k_\sU- \vec z^* \right\Vert_\sF.
$

\par Second, let $\bone_{n|\tilde n}=[\bone_n; \bzero_{\tilde n-n}]\in\mathbb{R}^{\tilde n}$, and $\bm{I}_{n|\tilde n}=\text{diag}(\bone_{n|\tilde n})\in \mathbb{R}^{\tilde n\times\tilde n}$.
By direct computation, $\nabla j(\vec z)^\T=\bone_{n}^\T\nabla J(\bone_n\vec z^\T),\,\forall \vec z$. From \eqref{eq-lemma10-0}, we obtain that for all $k$, $\bone_{\tilde n}^\T\nabla^k=\bone_{n|\tilde n}^\T\nabla^k=\bone_{n}^\T \nabla J\left(\bm{Z}_0^{k}\right)$.
Then,
\bea\label{claim3-5}
&\left\Vert\sum_{t=0}^{b-1}\eta^{k,t}\left(\nabla j(\vec z_\sU^k)^\T - \bone_{\tilde n}^\T\nabla^{k+t}\right)\right\Vert_\sF
\\&=\left\Vert\sum_{t=0}^{b-1}\frac{\eta^{k,t}}{n}\left(\bone_{n}^\T\nabla J(\bone_n(\vec z_\sU^k)^\T)
-\bone_{n}^\T \nabla J(\bm{Z}_0^{k+t})\right)\right\Vert_\sF\\
&\le  \eta\sqrt{n}\beta \sum_{t=0}^{b-1}\left\Vert\bone_{n|\tilde n}^\T(\vec z_\sU^k)^\T 
-\bone_{n|\tilde n}^\T\vu^{k+t-1} {\bm{Z}}^{k+t}\right\Vert_\sF  
\\&\le 3\eta n\beta\sum_{t=0}^{b-1} \Vert \check{\bm{Z}}^{k+t}\Vert_\sF,
\ena
where the first inequality follows from \eqref{lem-beta}.
Third,  it follows from \eqref{lemma-tracking} and \eqref{S_V} that
\bea\label{claim3-6}
&\left\Vert\sum_{t=0}^{b-1}(\vec r^{k,t})^\T\left( {\bm{Y}}_\sV^{k+t}-\bone_\sV^{k+t} \bone_{\tilde n}^\T \partial^{k+t}\right)\right\Vert_\sF
\\&=\left\Vert\sum_{t=0}^{b-1}(\vec r^{k,t})^\T\left( {\bm{Y}}_\sV^{k+t}-\bone_\sV^{k+t}(\vv^{k+t})^\T  {\bm{Y}}_\sV^{k+t}\right)\right\Vert_\sF
\\&\le \sum_{t=0}^{b-1}\Vert \vec r^{k,t} \Vert_2 \Vert \check{\bm{Y}}_{\sV}^{k+t} \Vert_\sF\le \eta n\sum_{t=0}^{b-1}\Vert\check{\bm{Y}}_{\sV}^{k+t} \Vert_\sF.
\ena
Moreover, similar to \eqref{eq-lemma10-1}, we obtain,
\bea\label{claim3-7}
&\left\Vert\sum_{t=0}^{b-1}\eta^{k,t}\left[\bone_{\tilde n}^\T (\nabla^{k+t}-\partial^{k+t})\right]\right\Vert_\sF\\
&\le \eta n \sum_{t=0}^{b-1} \left\Vert\sum_{i=1}^n\sum_{p=1}^{m_i}\nabla j_{i,p}(\vec z_i^k) - \nabla j_{i,p}(\vec z_i^{\tau_{i,p}^{k}})  \right\Vert\\
&\le \eta n \beta \sum_{t=0}^{b-1}\sum_{j=t-bK+1}^{t}\Vert\bm{I}_{n|\tilde n}( {\bm{Z}}^{t+1}- {\bm{Z}}^t)\Vert_\sF \\
&\le \eta n \beta \sum_{t=0}^{b-1}\sum_{j=t-bK+1}^{t}\left(2\sqrt{n}\Vert\check{\bm{Z}}^j\Vert_\sF + \eta \Vert {\bm{Y}}^j\Vert_\sF \right).
\ena
Then, one can summarize \eqref{claim3-3}-\eqref{claim3-7} to get \eqref{ineq3}.
\subsubsection{Proof of \eqref{ineq4}}\label{important4}
By \eqref{S_V} and \eqref{lemma-tracking}, it holds that $\bone_{\tilde n}^\T \partial^k=(\vv^{k+t})^\T  {\bm{Y}}_{\sV}^k$ and
\bea\label{claim4-1}
&\Vert {\bm{Y}}^k\Vert_\sF
=\Vert \bm{V}^k\bm{T}_\sC^k {\bm{Y}}_{\sV}^k+\frac{1}{n}\bm{V}^k(\bone_\sV^k)^\T(\vv^{k+t})^\T  {\bm{Y}}_{\sV}^k\Vert_\sF\\
&\le \Vert \bm{V}^k\Vert_2 \Vert \bm{T}_\sC^k  {\bm{Y}}_{\sV}^k\Vert_\sF + \left\Vert \frac{1}{n}\bm{V}^k\bone_\sV^k\bone_{\tilde n}^\T\partial^k \right\Vert_\sF
\\&\le n \Vert \check{\bm{Y}}_{\sV}^k \Vert_\sF + \Vert \bone_\sV^k\bone_{\tilde{n}}^\T\partial^k \Vert_\sF.
\ena 
For the second term in \eqref{claim4-1}, we have
\bea\label{claim4-2}
&\Vert\bone_\sV^k\bone_{\tilde n}^\T \partial^k\Vert_\sF=\left\Vert\bone_\sV^k\left(\bone_{n}^\T \partial J^k-\bone_n^\T\nabla J(\bone_n(\vec z^\star)^\T)\right)\right\Vert_\sF\\
&\le \sqrt{\tilde{n}}   \left\Vert \sum_{p=1}^{m_i}\sum_{i=1}^n (j_{i,p}(\vec z_i^{\tau_{i,p}^{k}})-j_{i,p}(\vec z^*))\right\Vert\\
&\le \beta\sqrt{\tilde{n}} \sum_{t=k-bK}^{k-1}\Vert \bm{I}_{n|\tilde n} {\bm{Z}}^t - \bone_{n|\tilde n}(\vec z^*)^\T\Vert_\sF\\
&\le \beta\sqrt{\tilde{n}} \sum_{t=k-bK}^{k}(2\sqrt{n}\left\Vert \check{\bm{Z}}^t\right\Vert_\sF + \sqrt{n}\left\Vert \vec z_\sU^t-\vec z^*\right\Vert_\sF) \\
\ena
Thus, we obtain \eqref{ineq4} by using \eqref{claim4-1} and \eqref{claim4-2}.

\bibliography{vref}
\bibliographystyle{plain}

\end{document}